\providecommand{\algorithmname}{Algorithm}
\title{Distributionally Robust Bayesian Optimization\\ with $\varphi$-divergences}
\author{
  Hisham Husain \\
Amazon \\
\texttt{hushisha@amazon.com}
\And
Vu Nguyen \\
Amazon \\
\texttt{vutngn@amazon.com} \\
\And
Anton van den Hengel \\
Amazon \\
\texttt{hengelah@amazon.com}
}
\begin{document}

\maketitle

\global\long\def\se{\hat{\text{se}}}%

\global\long\def\interior{\text{int}}%

\global\long\def\boundary{\text{bd}}%

\global\long\def\new{\text{*}}%

\global\long\def\stir{\text{Stirl}}%

\global\long\def\dist{d}%

\global\long\def\HX{\entro\left(X\right)}%
 
\global\long\def\entropyX{\HX}%

\global\long\def\HY{\entro\left(Y\right)}%
 
\global\long\def\entropyY{\HY}%

\global\long\def\HXY{\entro\left(X,Y\right)}%
 
\global\long\def\entropyXY{\HXY}%

\global\long\def\mutualXY{\mutual\left(X;Y\right)}%
 
\global\long\def\mutinfoXY{\mutualXY}%

\global\long\def\xnew{y}%

\global\long\def\bx{\mathbf{x}}%

\global\long\def\bz{\mathbf{z}}%

\global\long\def\bu{\mathbf{u}}%

\global\long\def\bs{\boldsymbol{s}}%

\global\long\def\bk{\mathbf{k}}%

\global\long\def\bX{\mathbf{X}}%

\global\long\def\tbx{\tilde{\bx}}%

\global\long\def\by{\mathbf{y}}%

\global\long\def\bY{\mathbf{Y}}%

\global\long\def\bZ{\boldsymbol{Z}}%

\global\long\def\bU{\boldsymbol{U}}%

\global\long\def\bv{\boldsymbol{v}}%

\global\long\def\bn{\boldsymbol{n}}%

\global\long\def\bV{\boldsymbol{V}}%

\global\long\def\bK{\boldsymbol{K}}%

\global\long\def\bw{\vt w}%

\global\long\def\bbeta{\gvt{\beta}}%

\global\long\def\bmu{\gvt{\mu}}%

\global\long\def\btheta{\boldsymbol{\theta}}%

\global\long\def\blambda{\boldsymbol{\lambda}}%

\global\long\def\bgamma{\boldsymbol{\gamma}}%

\global\long\def\bpsi{\boldsymbol{\psi}}%

\global\long\def\bphi{\boldsymbol{\phi}}%

\global\long\def\bpi{\boldsymbol{\pi}}%

\global\long\def\eeta{\boldsymbol{\eta}}%

\global\long\def\bomega{\boldsymbol{\omega}}%

\global\long\def\bepsilon{\boldsymbol{\epsilon}}%

\global\long\def\btau{\boldsymbol{\tau}}%

\global\long\def\bSigma{\gvt{\Sigma}}%

\global\long\def\realset{\mathbb{R}}%

\global\long\def\realn{\realset^{n}}%

\global\long\def\integerset{\mathbb{Z}}%

\global\long\def\natset{\integerset}%

\global\long\def\integer{\integerset}%

\global\long\def\natn{\natset^{n}}%

\global\long\def\rational{\mathbb{Q}}%

\global\long\def\rationaln{\rational^{n}}%

\global\long\def\complexset{\mathbb{C}}%

\global\long\def\comp{\complexset}%

\global\long\def\compl#1{#1^{\text{c}}}%

\global\long\def\and{\cap}%

\global\long\def\compn{\comp^{n}}%

\global\long\def\comb#1#2{\left({#1\atop #2}\right) }%

\global\long\def\nchoosek#1#2{\left({#1\atop #2}\right)}%

\global\long\def\param{\vt w}%

\global\long\def\Param{\Theta}%

\global\long\def\meanparam{\gvt{\mu}}%

\global\long\def\meanmap{\mathbf{m}}%

\global\long\def\logpart{A}%

\global\long\def\simplex{\Delta}%

\global\long\def\simplexn{\simplex^{n}}%

\global\long\def\dirproc{\text{DP}}%

\global\long\def\ggproc{\text{GG}}%

\global\long\def\DP{\text{DP}}%

\global\long\def\ndp{\text{nDP}}%

\global\long\def\hdp{\text{HDP}}%

\global\long\def\gempdf{\text{GEM}}%

\global\long\def\ei{\text{EI}}%

\global\long\def\rfs{\text{RFS}}%

\global\long\def\bernrfs{\text{BernoulliRFS}}%

\global\long\def\poissrfs{\text{PoissonRFS}}%

\global\long\def\grad{\gradient}%
 
\global\long\def\gradient{\nabla}%

\global\long\def\cpr#1#2{\Pr\left(#1\ |\ #2\right)}%

\global\long\def\var{\text{Var}}%

\global\long\def\Var#1{\text{Var}\left[#1\right]}%

\global\long\def\cov{\text{Cov}}%

\global\long\def\Cov#1{\cov\left[ #1 \right]}%

\global\long\def\COV#1#2{\underset{#2}{\cov}\left[ #1 \right]}%

\global\long\def\corr{\text{Corr}}%

\global\long\def\sst{\text{T}}%

\global\long\def\SST{\sst}%

\global\long\def\ess{\mathbb{E}}%

\global\long\def\Ess#1{\ess\left[#1\right]}%


\global\long\def\fisher{\mathcal{F}}%

\global\long\def\bfield{\mathcal{B}}%
 
\global\long\def\borel{\mathcal{B}}%

\global\long\def\bernpdf{\text{Bernoulli}}%

\global\long\def\betapdf{\text{Beta}}%

\global\long\def\dirpdf{\text{Dir}}%

\global\long\def\gammapdf{\text{Gamma}}%

\global\long\def\gaussden#1#2{\text{Normal}\left(#1, #2 \right) }%

\global\long\def\gauss{\mathbf{N}}%

\global\long\def\gausspdf#1#2#3{\text{Normal}\left( #1 \lcabra{#2, #3}\right) }%

\global\long\def\multpdf{\text{Mult}}%

\global\long\def\poiss{\text{Pois}}%

\global\long\def\poissonpdf{\text{Poisson}}%

\global\long\def\pgpdf{\text{PG}}%

\global\long\def\iwshpdf{\text{InvWish}}%

\global\long\def\nwpdf{\text{NW}}%

\global\long\def\niwpdf{\text{NIW}}%

\global\long\def\studentpdf{\text{Student}}%

\global\long\def\unipdf{\text{Uni}}%

\global\long\def\transp#1{\transpose{#1}}%
 
\global\long\def\transpose#1{#1^{\mathsf{T}}}%

\global\long\def\mgt{\succ}%

\global\long\def\mge{\succeq}%

\global\long\def\idenmat{\mathbf{I}}%

\global\long\def\trace{\mathrm{tr}}%

\begin{abstract}
The study of robustness has received much attention due to its inevitability in data-driven settings where many systems face uncertainty. One such example of concern is Bayesian Optimization (BO), where uncertainty is multi-faceted, yet there only exists a limited number of works dedicated to this direction. In particular, there is the work of \citet{kirschner2020distributionally}, which bridges the existing literature of Distributionally Robust Optimization (DRO) by casting the BO problem from the lens of DRO. While this work is pioneering, it admittedly suffers from various practical shortcomings such as finite contexts assumptions, leaving behind the main question \textit{Can one devise a computationally tractable algorithm for solving this DRO-BO problem}? In this work, we tackle this question to a large degree of generality by considering robustness against data-shift in $\varphi$-divergences, which subsumes many popular choices, such as the $\chi^2$-divergence, Total Variation, and the extant Kullback-Leibler (KL) divergence. We show that the DRO-BO problem in this setting is equivalent to a finite-dimensional optimization problem which, even in the continuous context setting, can be easily implemented with provable sublinear regret bounds. We then show experimentally that our method surpasses existing methods, attesting to the theoretical results.

\end{abstract}

\section{Introduction}
 Bayesian Optimization (BO) [\citenum{Kushner_1964New,Jones_1998Efficient,Srinivas_2010Gaussian,Shahriari_2016Taking,nguyen2020knowing}] allows us to model a black-box function that is expensive to evaluate, in the case where noisy observations are available.  Many important applications of BO correspond to situations where the objective function depends on an additional context parameter [\citenum{Krause_2011Contextual,van2021personalized}], for example in health-care, recommender systems can be used to model information about a certain type of medical domain. BO has naturally found success in a number of scientific domains [\citenum{ueno2016combo,Hernandez_2017Parallel,li2018accelerating_ICDM, gopakumar2018algorithmic, tran2021simulation}] and also a staple in machine learning for the crucial problem of hyperparameter tuning [\citenum{ru2020bayesian,TVO_GP,parker2020provably,perrone2020amazon,wan2022bayesian}]. 

As with all data-driven approaches, BO is prone to cases where the given data \textit{shifts} from the data of interest. While BO models this in the form of Gaussian noise for the inputs to the objective function, the context distribution is assumed to be consistent. This can be problematic, for example in healthcare where patient information shifts over time. This problem exists in the larger domain of operations research under the banner of \textit{distributionally robust optimization} (DRO) [\citenum{scarf1957min}], where one is interested in being \textit{robust} against shifts in the distribution observed. In particular, for a given \textit{distance} between distributions $\mathsf{D}$, DRO studies robustness against adversaries who are allowed to modify the observed distribution $p$ to another distribution in the set:
\begin{align*}
    \braces{q : \mathsf{D}(p,q) \leq \varepsilon},
\end{align*}
for some $\varepsilon > 0$. One can interpret this as a ball of radius $\varepsilon$ for the given choice of $\mathsf{D}$ and the adversary perturbs the observed distribution $p$ to $q$ where $\varepsilon$ is a form of ``budget''. 

Distributional shift is a topical problem in machine learning and the results of DRO have been specialized in the context of supervised learning [\citenum{duchi2013local,duchi2016statistics,duchi2019variance,cranko2020generalised,blanchet2019robust,gao2017wasserstein,husain2020distributional}], reinforcement learning [\citenum{hou2020robust}] and Bayesian learning [\citenum{shapiro2021bayesian}], as examples. One of the main challenges however is that the DRO is typically intractable since in the general setting of continuous contexts, involves an infinite dimensional constrained optimization problem. The choice of $\mathsf{D}$ is crucial here as various choices such as the Wasserstein distance [\citenum{blanchet2019quantifying, blanchet2019robust,cranko2018lipschitz,shafieezadeh2019regularization}], Maximum Mean Discrepancy (MMD) [\citenum{staib2019distributionally}] and $\varphi$-divergences \footnote{as known as $f$-divergences in the literature} [\citenum{duchi2013local,duchi2016statistics}] allow for computationally tractable regimes. In particular, these specific choices of $\mathsf{D}$ have shown intimate links between regularization [\citenum{husain2020distributional}] which is a conceptually central topic of machine learning.

More recently however, DRO has been studied for the BO setting in \citet{kirschner2020distributionally}, which as one would expect, leads to a complicated minimax problem, which causes a computational burden practically speaking. \citet{kirschner2020distributionally} makes the first step and casts the formal problem however develops an algorithm only in the case where $\mathsf{D}$ has been selected as the MMD. While, this work makes the first step and conceptualizes the problem of distributional shifts in context for BO, there are two main practical short-comings. Firstly, the algorithm is developed specifically to the MMD, which is easily computed, however cannot be replaced by another choice of $\mathsf{D}$ whose closed form is not readily accessible with samples such as the $\varphi$-divergence. Secondly, the algorithm is only tractable when the contexts are finite since at every iteration of BO, it requires solving an $M$-dimensional problem where $M$ is the number of contexts. 

The main question that remains is, \textit{can we devise an algorithm that is computationally tractable for tackling the DRO-BO setting}? We answer this question to a large degree of generality by considering distributional shifts against $\varphi$-divergences - a large family of divergences consisting of the extant Kullback-Leibler (KL) divergence, Total Variation (TV) and $\chi^2$-divergence, among others. In particular, we exploit existing advances made in the large literature of DRO to show that the BO objective in this setting for any choice of $\varphi$-divergence yields a computationally tractable algorithm, even for the case of continuous contexts.  We also present a robust regret analysis that illustrates a sublinear regret. Finally, we show, along with computational tractability, that our method is empirically superior on standard datasets against several baselines including that of \citet{kirschner2020distributionally}. In summary, our main contributions are
\begin{enumerate}
    \item A theoretical result showing that the  minimax distributionally robust BO objective with respect to $\varphi$ divergences is equivalent to a single minimization problem. 
    \item An efficient algorithm, that works in the continuous context regime, for the specific cases of the $\chi^2$-divergence and TV distance, which admits a conceptually interesting relationship to regularization of BO.
    \item A regret analysis that specifically informs how we can choose the DRO $\varepsilon$-budget to attain sublinear regret.
\end{enumerate}

\section{Related Work}
\vspace{-0.2cm}
Due to the multifaceted nature of our contribution, we discuss two streams of related literature, one relating to studies of robustness in Bayesian Optimization (BO) and one relating to advances in Distributionally Robust Optimization (DRO).

In terms of BO, the work most closest to ours is \citet{kirschner2020distributionally} which casts the distributionally robust optimization problem over contexts. In particular, the work shows how the DRO objective for any choice of divergence $\mathsf{D}$ can be cast, which is exactly what we build off. The main drawback of this method however is the limited practical setting due to the expensive inner optimization, which heavily relies on the MMD, and therefore cannot generalize easily to other divergences that are not available in closed forms. Our work in comparison, holds for a much more general class of divergences, and admits a practical algorithm that involves a finite dimensional optimization problem. In particular, we derive the result when $\mathsf{D}$ is chosen to be the $\chi^2$-divergence which we show performs the best empirically. This choice of divergence has been studied in the related problem of Bayesian quadrature [\citenum{DRBQO}], and similarly illustrated strong performance, complimenting our results. There also exists work of BO that aim to be robust by modelling adversaries through noise, point estimates or non-cooperative games [\citenum{Nogueira_2016Unscented, martinez2018practical, beland2017bayesian,oliveira2019bayesian, sessa2019no}]. The main difference between our work and theirs is that the notion of robustness we tackle is at the \textit{distributional} level. Another similar work to ours is that of \citet{tay2022efficient} which considers approximating DRO-BO using Taylor expansions based on the sensitivity of the function. In some cases, the results coincide with ours however their result must account for an approximation error in general. Furthermore, an open problem as stated in their work is to solve the DRO-BO problem for continuous context domains, which is precisely one of the advantages of our work.


From the perspective of DRO, our work essentially is an extension of \citet{duchi2013local,duchi2016statistics} which develops results that connect $\varphi$-divergence DRO to variance regularization. In particular, they assume $\varphi$ admits a continuous second derivative, which allows them to connect the $\varphi$-divergence to the $\chi^2$-divergence and consequently forms a general connection to constrained variance. While the work is pioneering, this assumption leaves out important $\varphi$-divergences such as the Total Variation (TV) - a choice of divergence which we illustrate performs well in comparison to standard baselines in BO. At the technical level, our derivations are  similar to \citet{ahmadi2012entropic} however our result, to the best of our knowledge, is the first such work that develops it in the context of BO. In particular, our results for the Total Variation and $\chi^2$-divergence show that variance is a key penalty in ensuring robustness which is a well-known phenomena existing in the realm of machine learning [\citenum{duchi2013local,duchi2019variance,cranko2020generalised,husain2020distributional,abadeh2015distributionally}].

\section{Preliminaries} \label{sec:preliminary}
\paragraph{Bayesian Optimization} We consider optimizing a \textit{black-box} function, $f: \mathcal{X} \to \mathbb{R}$ with respect to the \textit{input} space $\mathcal{X} \subseteq \mathbb{R}^d$. As a black-box function, we do not have access to $f$ directly however receive input in a sequential manner: at time step $t$, the learner chooses some input $\bx_t \in \mathcal{X}$ and observes the \textit{reward} $y_t= f(\bx_t) + \eta_t$ where the noise $\eta_t \sim \mathcal{N}(0,\sigma^2_f)$ and $\sigma^2_f$ is the output noise variance. Therefore, the goal is to optimize 
\begin{align*}
    \sup_{\bx \in \mathcal{X}} f(\bx).
\end{align*}
Additional to the input space $\mathcal{X}$, we introduce the \textit{context} spaces $\mathcal{C}$, which we assume to be compact. These spaces are assumed to be  separable completely metrizable topological spaces.\footnote{We remark that this is an extremely mild condition, satisfied by the large majority of considered examples.} We have a reward function, $f: \mathcal{X} \times \mathcal{C} \to \mathbb{R}$ which we are interested in optimizing with respect to $\mathcal{X}$. Similar to sequential optimization, at time step $t$ the learner chooses some input $\bx_t \in \mathcal{X}$ and receives a context $c_t \in \mathcal{C}$ and $f(\bx_t,c_t) + \eta_t$. Here, the learner can not choose a context $c_t$, but receive it from the environment. Given the context information, the objective function is written as 
\begin{align*}
    \sup_{\bx \in \mathcal{X}} \E_{c \sim p}[f(\bx,c)], 
\end{align*}
where $p$ is a probability distribution over contexts.


\paragraph{Gaussian Processes} We follow a popular choice in BO [\citenum{Shahriari_2016Taking}] to use GP as a surrogate model for optimizing  $f$. A GP [\citenum{Rasmussen_2006gaussian}] defines a probability distribution
over functions $f$ under the assumption that any subset of points
$\left\{ \bx_{i},f(\bx_{i})\right\} $ is normally distributed. Formally,
this is denoted as:
\begin{align*}
f(\bx)\sim \text{GP}\left(m(\bx),k(\bx,\bx')\right),
\end{align*}
where $m\left(\bx\right)$ and $k\left(\bx,\bx'\right)$ are the mean
and covariance functions, given by $m(\bx)=\mathbb{E}\left[f(\bx)\right]$
and $k(\bx,\bx')=\mathbb{E}\left[(f(\bx)-m(\bx))(f(\bx')-m(\bx'))^{T}\right]$. For predicting $f_{*}=f\left(\bx_{*}\right)$ at a new data point $\bx_{*}$,  
the conditional probability
follows a univariate Gaussian distribution as $p\bigl(f_{*}\mid \bx_*, [\bx_1...\bx_N],[y_1,...y_N] \bigr)\sim\mathcal{N}\left(\mu\left(\bx_{*}\right),\sigma^{2}\left(\bx_{*}\right)\right)$.
Its mean and variance are given by:
\begin{minipage}{0.49\textwidth}
\begin{align}
\mu\left(\bx_{*}\right)= & \mathbf{k}_{*,N}\mathbf{K}_{N,N}^{-1}\mathbf{y}, \label{eq:gp_mean}
\end{align}
\end{minipage}
\begin{minipage}{0.49\textwidth}
\begin{align}
\sigma^{2}\left(\bx_{*}\right)= & k_{**}-\mathbf{k}_{*,N}\mathbf{K}_{N,N}^{-1}\mathbf{k}_{*,N}^{T}\label{eq:gp_var}
\end{align}
\end{minipage}

where $k_{**}=k\left(\bx_{*},\bx_{*}\right)$, $\bk_{*,N}=[k\left(\bx_{*},\bx_{i}\right)]_{\forall i\le N}$ and $\bK_{N,N}=\left[k\left(\bx_{i},\bx_{j}\right)\right]_{\forall i,j\le N}$.
 As GPs give full uncertainty information with any prediction, they
provide a flexible nonparametric prior for Bayesian optimization.
We refer to \citet{Rasmussen_2006gaussian}
for further details on GPs.

\paragraph{Distributional Robustness} 
Let $\Delta(\mathcal{C})$ denote the set of probability distributions over $\mathcal{C}$. A \textit{divergence} between distributions $\mathsf{D}: \Delta(\mathcal{C}) \times \Delta(\mathcal{C}) \to \mathbb{R}$ is a dissimilarity measure that satisfies $\Delta(p,q) \geq 0$ with equality if and only if $p = q$ for $p,q \in \Delta(\mathcal{C})$. For a function, $h: \mathcal{C} \to \mathbb{R}$, base probability measure $p \in \Delta(\mathcal{C})$, the central concern of Distributionally Robust Optimization (DRO) [\citenum{ben2013robust,rahimian2019distributionally,bennouna2021learning}] is to compute
\begin{align}
\label{background:dro-eq1}
    \sup_{q \in B_{\varepsilon,\mathsf{D}}(p)} \E_{q(c)}[h(c)],
\end{align}
where $B_{\varepsilon,\mathsf{D}}(p) = \braces{q \in \Delta(\mathcal{C}) :  \mathsf{D}(p,q) \leq \varepsilon}$, is ball of distributions $q$ that are $\varepsilon$ away from $p$ with respect to the divergence $\mathsf{D}$. The objective in Eq. \eqref{background:dro-eq1} is intractable, especially in setting where $\mathcal{C}$ is continuous as it amounts to a constrained infinite dimensional optimization problem. It is also clear that the choice of $\mathsf{D}$ is crucial for both computational and conceptual purposes. The vast majority of choices typically include the Wasserstein due to the transportation-theoretic interpretation and with a large portion of existing literature finding connections to Lipschitz regularization [\citenum{blanchet2019quantifying, blanchet2019robust,cranko2018lipschitz,shafieezadeh2019regularization}]. Other choices where they have been studied in the supervised learning setting include the Maximum Mean Discrepancy (MMD) [\citenum{staib2019distributionally}] and $\varphi$-divergences [\citenum{duchi2013local,duchi2016statistics}].

\paragraph{Distributionally Robust Bayesian Optimization} Recently, the notion of DRO has been applied to BO [\citenum{kirschner2020distributionally,tay2022efficient}], who consider robustness with respect to shifts in the context space and therefore are interested in solving
\begin{align*}
    \sup_{\bx \in \mathcal{X}} \inf_{q \in B_{\varepsilon, D}(p)} \E_{c \sim q}[f(\bx,c)],
\end{align*}
where $p$ is the reference distribution. This objective becomes significantly more difficult to deal with since not only does it involve a constrained and possibly infinite dimensional optimization problem however also involves a minimax which can cause instability issues if solved iteratively. 

\citet{kirschner2020distributionally} tackle these problems by letting $\mathsf{D}$ be the kernel Maximum Mean Discrepancy (MMD), which is a popular choice of discrepancy motivated by kernel mean embeddings [\citenum{gretton2012kernel}]. In particular, the MMD can be efficiently estimated in $O(n^2)$ where $n$ is the number of samples. Naturally, this has two main drawbacks: The first is that it is still computationally expensive since one is required to solve two optimization problems, which can lead to instability and secondly, the resulting algorithm is limited to the scheme where the number of contexts is finite. In our work, we consider $\mathsf{D}$ to be a $\phi$-divergence, which includes the Total Variance, $\chi^2$ and Kullback-Leibler (KL) divergence and furthermore show that minmax objective can be reduced to a single maximum optimization problem which resolves both the instability and finiteness assumption. In particular, we also present a similar analysis, showing that the robust regret decays sublinearly for the right choices of radii.

\section{$\phi$-Robust Bayesian Optimization}
In this section, we present the main result on distributionally robustness when applied to BO using $\varphi$-divergence. Therefore, we begin by defining this key quantity.
\begin{definition}[$\phi$-divergence]
Let $\phi: \mathbb{R} \to (-\infty,\infty]$ be a convex, lower semi-continuous function such that $\phi(1) = 0$. The $\phi$-divergence between $p,q \in \Delta(\mathcal{C})$ is defined as
\begin{align*}
    \mathsf{D}_{\phi}(p,q) = \E_{q(c)}\left[\phi\bracket{\frac{dp}{dq}(c)} \right],
\end{align*}
where $dp/dq$ is the Radon-Nikodym derivative if $p \ll q$ and $\mathsf{D}_{\varphi}(p,q) = +\infty$ otherwise.
\end{definition}
Popular choices of the convex function $\phi$ include $\phi(u) = (u-1)^2$ which yields the $\chi^2$ and, $\phi(u) = \card{u-1}$, $\phi(u) = u \log u$ which correspond to the $\chi^2$ and KL divergences respectively. At any time step $t \geq 1$, we consider distributional shifts with respect to an $\phi$-divergence for any choice of $\phi$ and therefore relevantly define the DRO ball as 
\begin{align*}
    B_{\phi}^{t}(p_t) := \braces{q \in \Delta(\mathcal{C}) : \mathsf{D}_{\phi}(q,p_t) \leq \varepsilon_t },
\end{align*}
where $p_t = \frac{1}{t} \sum_{s=1}^t \delta_{c_s}$ is the reference distribution and $\varepsilon_t$ is the distributionally robust radius chosen at time $t$. We remark that for our results, the choice of $p_t$ is flexible and can be chosen based on the specific domain application. The $\varphi$ divergence, as noted from the definition above, is only defined finitely when the measures $p,q$ are absolutely continuous to each other and there is regarded as a \textit{strong} divergence in comparison to the Maximum Mean Discrepancy (MMD), which is utilized in \citet{kirschner2020distributionally}. The main consequence of this property is that the geometry of the ball $B_{\varphi}^t$ would differ based on the choice of $\varphi$-divergence. The $\varphi$-divergence is a very popular choice for defining this ball in previous studies of DRO in the context of supervised learning due to the connections and links it has found to variance regularization [\citenum{duchi2013local,duchi2016statistics,duchi2019variance}].

We will exploit various properties of the $\varphi$-divergence to derive a result that reaps the benefits of this choice such as a reduced optimization problem - a development that does not currently exist for the MMD [\citenum{kirschner2020distributionally}]. We first define the convex conjugate of $\varphi$ as $\phi^{\star}(u) = \sup_{u' \in \operatorname{dom}_{\phi} } \bracket{u\cdot u' - \phi(u')}$, which we note is a standard function that is readily available in closed form for many choices of $\varphi$.
\begin{theorem}\label{theorem:main}
Let $\phi: \mathbb{R} \to (-\infty, \infty]$ be a convex lower semicontinuous mapping such that $\phi(1) = 0$. Let $f$ be measurable and bounded. For any $\epsilon > 0$, it holds that
\begin{align*}
    \sup_{\bx \in \mathcal{X}} \inf_{q \in B_{\phi}^{t}(p)} \E_{c \sim q}[f(\bx,c)] = 
    \sup_{\bx \in \mathcal{X},  \lambda \geq 0, b \in \mathbb{R}} \bracket{ b - \lambda \epsilon_t - \lambda \E_{p_{t}(c)}\left[ \varphi^{\star} \bracket{\frac{b - f(\bx,c)}{\lambda} }\right]  }.
\end{align*}
\end{theorem}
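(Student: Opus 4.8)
The plan is to fix the outer variable $\bx \in \mathcal{X}$, prove the identity for the inner minimization alone, and then take $\sup_{\bx}$ on both sides (which preserves equality since the identity holds for each fixed $\bx$). Write $g(c) := f(\bx,c)$, which is bounded and measurable by hypothesis. The first step is to reparametrize the feasible set: since $\mathsf{D}_{\phi}(q,p_t) < \infty$ forces $q \ll p_t$, every feasible $q$ admits a density $L := dq/dp_t \geq 0$ against $p_t$, under which $\E_{c\sim q}[g] = \E_{p_t}[L\,g]$, the normalization reads $\E_{p_t}[L] = 1$, and the ball constraint reads $\E_{p_t}[\phi(L)] \leq \varepsilon_t$. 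Hence
\[
\inf_{q \in B_\phi^t(p_t)} \E_{c\sim q}[g(c)] = \inf_{L \geq 0} \Big\{ \E_{p_t}[L\,g] : \E_{p_t}[L] = 1,\ \E_{p_t}[\phi(L)] \leq \varepsilon_t \Big\},
\]
which is a convex program in $L$ because $\phi$ is convex.

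Next I would dualize. Introducing a multiplier $\lambda \geq 0$ for the divergence inequality and $b \in \mathbb{R}$ for the normalization equality, the Lagrangian rearranges to
\[
\E_{p_t}[L\,g] + \lambda\big(\E_{p_t}[\phi(L)] - \varepsilon_t\big) + b\big(1 - \E_{p_t}[L]\big) = b - \lambda\varepsilon_t + \E_{p_t}\big[L(g-b) + \lambda\phi(L)\big].
\]
Minimizing over $L \geq 0$, the expectation term is separable in the value of $L$ at each point, so interchanging the infimum and the integration leaves the pointwise problem $\inf_{\ell}\{\ell(g(c)-b) + \lambda\phi(\ell)\}$. Writing $s = (b-g(c))/\lambda$ for $\lambda > 0$, this equals $\lambda\inf_\ell(\phi(\ell) - s\ell) = -\lambda\,\phi^{\star}\!\big((b-g(c))/\lambda\big)$ by the definition of the convex conjugate, with the sign constraint $\ell \geq 0$ absorbed since $\operatorname{dom}_\phi \subseteq [0,\infty)$. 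Substituting back reproduces exactly the dual objective $b - \lambda\varepsilon_t - \lambda\,\E_{p_t}[\phi^\star((b-g)/\lambda)]$.

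It remains to establish strong duality, namely that the primal infimum equals the supremum of this dual objective over $\lambda \geq 0,\ b \in \mathbb{R}$; this is where the only genuine subtlety lies. The forward (weak-duality) inequality is immediate, so the work is in ruling out a gap. I would verify Slater's condition: because $\varepsilon_t > 0$ and $\mathsf{D}_\phi(p_t,p_t) = \phi(1) = 0 < \varepsilon_t$, the choice $q = p_t$ (i.e.\ $L \equiv 1$) is strictly feasible, so the convex inner program has no duality gap. The cleanest route to avoid infinite-dimensional technicalities is to exploit that the reference $p_t = \tfrac1t\sum_s \delta_{c_s}$ is supported on finitely many points; absolute continuity $q \ll p_t$ then confines $q$ to this finite support, so $L$ is a finite nonnegative vector, the ``interchange'' of $\inf$ and $\E_{p_t}$ is just $\min\sum_s h_s(L_s) = \sum_s \min_{L_s} h_s(L_s)$, and both this and strong duality under Slater become classical finite-dimensional facts. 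This reduction is precisely what renders the result tractable for continuous contexts.

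The main obstacle is therefore the no-gap step together with the boundary case $\lambda = 0$: as $\lambda \to 0$ one interprets $\lambda\,\phi^\star(\,\cdot\,/\lambda)$ through its perspective/recession limit (the support function of $\operatorname{dom}\phi^\star$), and the boundedness of $f$ guarantees the dual objective stays finite and the supremum is attained in the relevant range. Modulo these standard convex-analytic points, assembling the pointwise conjugate identity into the Lagrangian and taking $\sup_{\bx}$ over both sides completes the argument.
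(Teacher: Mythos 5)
Your proof is correct, and it reaches the identity by a genuinely different route than the paper. The paper introduces a single multiplier $\lambda$ for the ball constraint, swaps $\inf_q\sup_{\lambda}$ via Fan's minimax theorem (using linearity in $q$, convexity and lower semicontinuity of $q\mapsto\mathsf{D}_{\varphi}(q,p_t)$, and compactness of $\Delta(\mathcal{C})$ inherited from $\mathcal{C}$), and only then invokes a known Fenchel-dual identity for $\sup_{q}\{\mathbb{E}_{q}[h]-\lambda\mathsf{D}_{\varphi}(q,p)\}$, which is where the variable $b$ enters as the dual of the normalization. You instead pass to the likelihood ratio $L=dq/dp_t$, dualize the normalization and the divergence constraint simultaneously, compute the dual function by pointwise conjugation, and close the gap with Slater's condition. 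Your route is self-contained (no external minimax theorem or divergence-duality lemma), makes the origin of $b$ transparent, and---by exploiting the finite support of the empirical $p_t$---reduces everything to classical finite-dimensional convex duality; you are also more careful than the paper about the $\lambda=0$ boundary and the restriction $\ell\geq 0$ inside the conjugate. What the paper's route buys in exchange is that it never uses finiteness of the support, so the identity holds verbatim for an arbitrary reference measure on a compact context space (the authors explicitly advertise that the choice of $p_t$ is flexible); in that generality your finite-support shortcut would have to be replaced by an infinite-dimensional Slater or minimax argument. One small caveat: your claim that $\ell\geq 0$ is absorbed because $\operatorname{dom}\phi\subseteq[0,\infty)$ is an assumption not present in the theorem statement (e.g.\ $\phi(u)=(u-1)^2$ is finite on negatives unless one adopts the convention that $\phi=+\infty$ there); the paper silently makes the same identification in its examples, so this is a shared convention rather than a gap specific to your argument.
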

\textbf{Proof (Sketch)} 
The proof begins by rewriting the constraint over the $\varphi$-divergence constrained ball with the use of Lagrangian multipliers. Using existing identities for $f$-divergences, a minimax swap yields a two-dimensional optimization problem, over $\lambda\geq 0$ and $b \in \mathbb{R}$.

We remark that similar results exist for other areas such as supervised learning [\citenum{shapiro2017distributionally}], robust optimization [\citenum{ben2013robust}] and certifying robust radii [\citenum{dvijotham2020framework}]. However this is, to the best of our knowledge, the first development when applied to optimizing expensive black-box functions, the case of BO. The above Theorem  is practically compelling for three main reasons. First, one can note that compared to the left-hand side, the result converts this into a single optimization (max) over three variables, where two of the variables are $1$-dimensional, reducing the computational burden significantly. Secondly, the notoriously difficult max-min problem becomes only a max, leaving behind instabilities one would encounter with the former objective. Finally, the result makes very mild assumptions on the context parameter space $\mathcal{C}$, allowing infinite spaces to be chosen, which is one of the challenges for existing BO advancements. We show that for specific choices of $\phi$, the optimization over $b$ and even $\lambda$ can be expressed in closed form and thus simplified. All proofs for the following examples can be found in the Appendix Section \ref{supp-formal}.

\begin{example}[$\chi^2$-divergence] \label{example_chi2}
Let $\phi(u) = (u-1)^2$, then for any measurable and bounded $f$ we have for any choice of $\varepsilon_t$
\begin{align*}
    &\sup_{\bx \in \mathcal{X}} \inf_{q \in B_{\phi}^{t}(p_t)} \E_{c \sim q}[f(\bx,c)] = \sup_{\bx \in \mathcal{X}} \bracket{ \E_{p_t(c)}[f(\bx,c)] 
    - \sqrt{\varepsilon_t \cdot \operatorname{Var}_{p_t(c)}[f(\bx,c)] }}.
\end{align*}
\end{example}
The above example can be easily implemented as it involves the same optimization problem however now appended with a variance term. Furthermore, this objective admits a compelling conceptual insight which is that, by enforcing a penalty in the form of variance, one attains robustness. The idea that regularization provides guidance to robustness or generalization is well-founded in machine learning more generally for example in supervised learning [\citenum{duchi2013local,duchi2016statistics}]. We remark that this penalty and its relationship to $\chi^2$-divergence has been developed in the similar yet related problem of Bayesian quadrature [\citenum{DRBQO}]. Moreover, it can be shown that if $\varphi$ is twice differentiable then $\mathsf{D}_{\varphi}$ can be approximated by the $\chi^2$-divergence via Taylor series, which makes $\chi^2$-divergence a centrally appealing choice for studying robustness. We now derive the result for a popular choice of $\varphi$ that is not differentiable.
\vspace{-0.75em}
\begin{example}[Total Variation] \label{example_tv}
Let $\varphi(u) = \card{u-1}$, then for any measurable and bounded $f$ we have for any choice of $\varepsilon_t$ 
\begin{align*}
    &\sup_{\bx \in \mathcal{X}} \inf_{q \in B_{\phi}^{t}(p_t)} \E_{c \sim q}[f(\bx,c)] = \sup_{\bx \in \mathcal{X}} \bracket{ \E_{p_t(c)}[f(\bx,c)] - \frac{\varepsilon_t}{2}\bracket{\sup_{c \in \mathcal{C}}f(\bx,c) - \inf_{c \in \mathcal{C}}f(\bx,c)  } }.
\end{align*}
\end{example}
Similar to the $\chi^2$-case, the result here admits a variance-like term in the form of the difference between the maximal and minimal elements. We remark that such a result is conceptually interesting since both losses admit an objective that resembles a mean-variance which is a natural concept in ML, but advocates for it from the perspective of distributional robustness. This result exists for the supervised learning in \citet{duchi2019variance} however is completely novel for BO and also holds for a choice of non-differentiable $\varphi$, hinting at the deeper connection between $\varphi$-divergence DRO and variance regularization.


\subsection{Optimization with the GP Surrogate}

To handle the distributional robustness, we have rewritten the objective function using $\phi$ divergences in Theorem \ref{theorem:main}.
In DRBO setting, we sequentially select a next point $\bx_t$ for querying a black-box function. Given the observed context $c_t \sim q$ coming from the environment, we evaluate the black-box function and observe the output as $y_t = f(\bx_t, c_t) + \eta_t$ where the noise $\eta_t \sim \mathcal{N}(0,\sigma^2_f)$ and $\sigma^2_f$ is the noise variance.
\begin{wrapfigure}{R}{0.55\textwidth}
\begin{minipage}{0.55\textwidth}
\begin{algorithm}[H]
    \caption{DRBO with $\varphi$-divergence \label{alg:drbo}}
	\begin{algorithmic}[1]
		\STATE {\bfseries Input:}  Max iteration $T$, initial data $D_0$,    $\eta$
	
        
        \FOR{$t=1, \dots, T$}
            
        	\STATE Fit and estimate GP hyperparameter given $D_{t-1}$
        \STATE Select a next input $\bx_{t}=\arg\max\alpha(\bx)$
        	\STATE \underline{$\chi^2$-divergence}: $\alpha(\bx) := \alpha^{\chi^2} (\bx )$ from Eq. (\ref{eq:alpha_x2})
        	\STATE \underline{Total Variation}: $\alpha(\bx) := \alpha^{TV} (\bx )$ from Eq. (\ref{eq:alpha_tv})
        	\STATE Observe a context $c_t \sim q$
        	\STATE Evaluate the black-box $y_t = f(\bx_t,c_t) + \eta_t$
        	\STATE Augment $D_{t}=D_{t-1}\cup\left(\bx_{t},c_t,y_{t} \right)$
    	\ENDFOR
    	
   
	\end{algorithmic}
\end{algorithm}
\end{minipage}
\end{wrapfigure}




As a common practice in BO, at the iteration $t$, we model the GP surrogate model using the observed data $\{\bx_i,y_i\}_{i=1}^{t-1}$ and make a decision by maximizing the acquisition function which is build on top of the GP surrogate: 
\begin{align*}
\bx_t = \arg \max_{\bx \in \mathcal{X} } \alpha(\bx).
\end{align*}
While our method is not restricted to the form of the acquisition function, for convenience in the theoretical analysis, we follow the GP-UCB [\citenum{Srinivas_2010Gaussian}]. Given the GP predictive mean and variance from Eqs. (\ref{eq:gp_mean},\ref{eq:gp_var}), we have the acquisition function for the $\chi^2$ in Example \ref{example_chi2} as follows:
\begin{align} \label{eq:alpha_x2}
    \alpha^{\chi^2}(\bx) :=&  \frac{1}{|C|} \sum_{c} \left[ \mu_t(\bx,c) + \sqrt{\beta_t}\sigma_t(\bx,c) \right] -\sqrt{ \frac{\epsilon_t}{|C|} \sum_{c} \big( \mu_t(\bx,c) - \bar{\mu_t} \big)^2 }
\end{align}
where $\beta_t$ is a explore-exploit hyperparameter defined in \citet{Srinivas_2010Gaussian}, $\bar{\mu_t} = \frac{1}{|C|} \sum_{c} \mu_t(\bx,c) $ and $c \sim q$ can be generated in a one dimensional space to approximate the expectation and the variance. In the experiment, we select $q$ as the uniform distribution, but it is not restricted to.  Similarly, an acquisition function for Total Variation in Example \ref{example_tv} is written as
\begin{align} \label{eq:alpha_tv}
    \alpha^{TV}( \bx ) :=& \frac{1}{|C|} \sum_{c} \left[ \mu_t(\bx,c) + \sqrt{\beta_t}\sigma_t(\bx,c)  \right] -\frac{\epsilon_t}{2} \big( \max \mu_t(\bx,c) - \min \mu_t(\bx,c) \big).
\end{align}
We summarize all computational steps in Algorithm \ref{alg:drbo}.

\paragraph{Computational Efficiency against MMD.}

We make an important remark that since we do not require our context space to be finite, our implementation scales only linearly with the number of context samples $|C|$ drawing from $q$. This allows us to discretize our space and draw as many context samples as required while only paying a linear price.
On the other hand, the MMD [\citenum{kirschner2020distributionally}] at every iteration of $t$ requires solving an $|C|$-dimensional constraint optimization problem that has no closed form solution. We refer to Section \ref{sec_experiment_computational_comparison} for the empirical comparison.

\begin{figure*}
\vspace{-1pt}
\begin{subfigure}[b]{1\textwidth}
\includegraphics[width=0.34\textwidth]{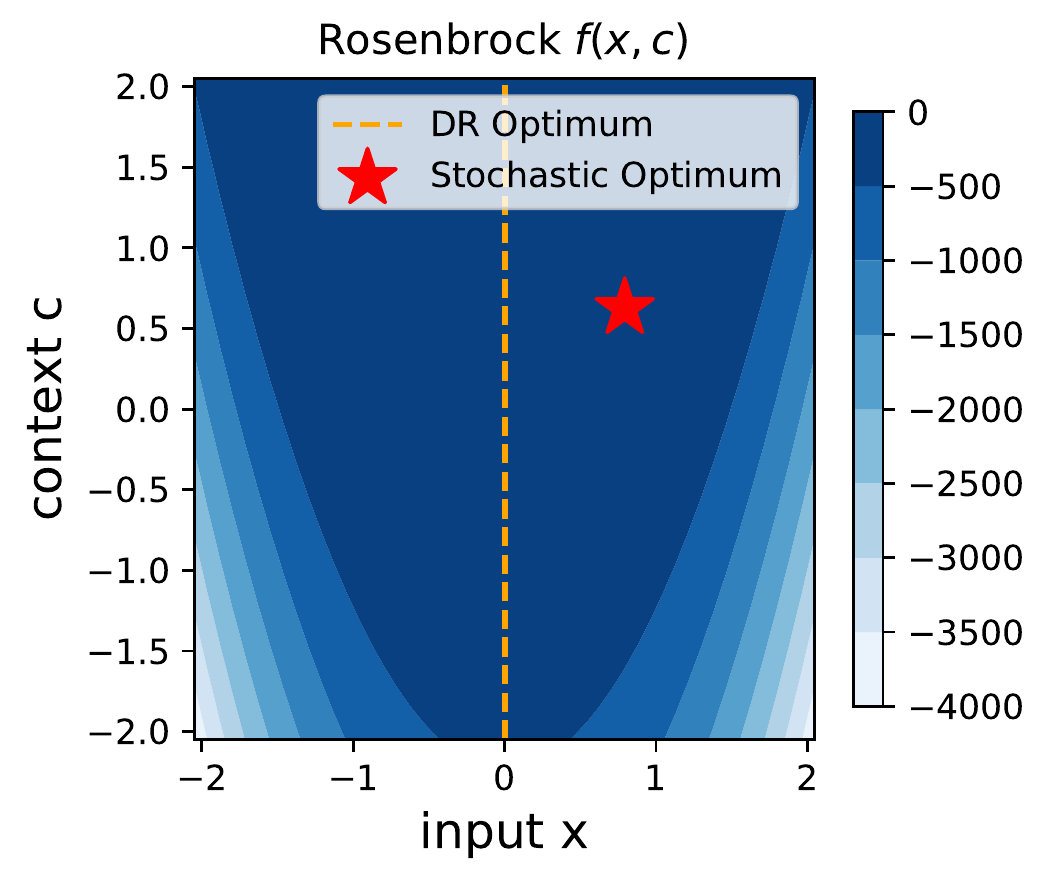}
\includegraphics[width=0.30\textwidth]{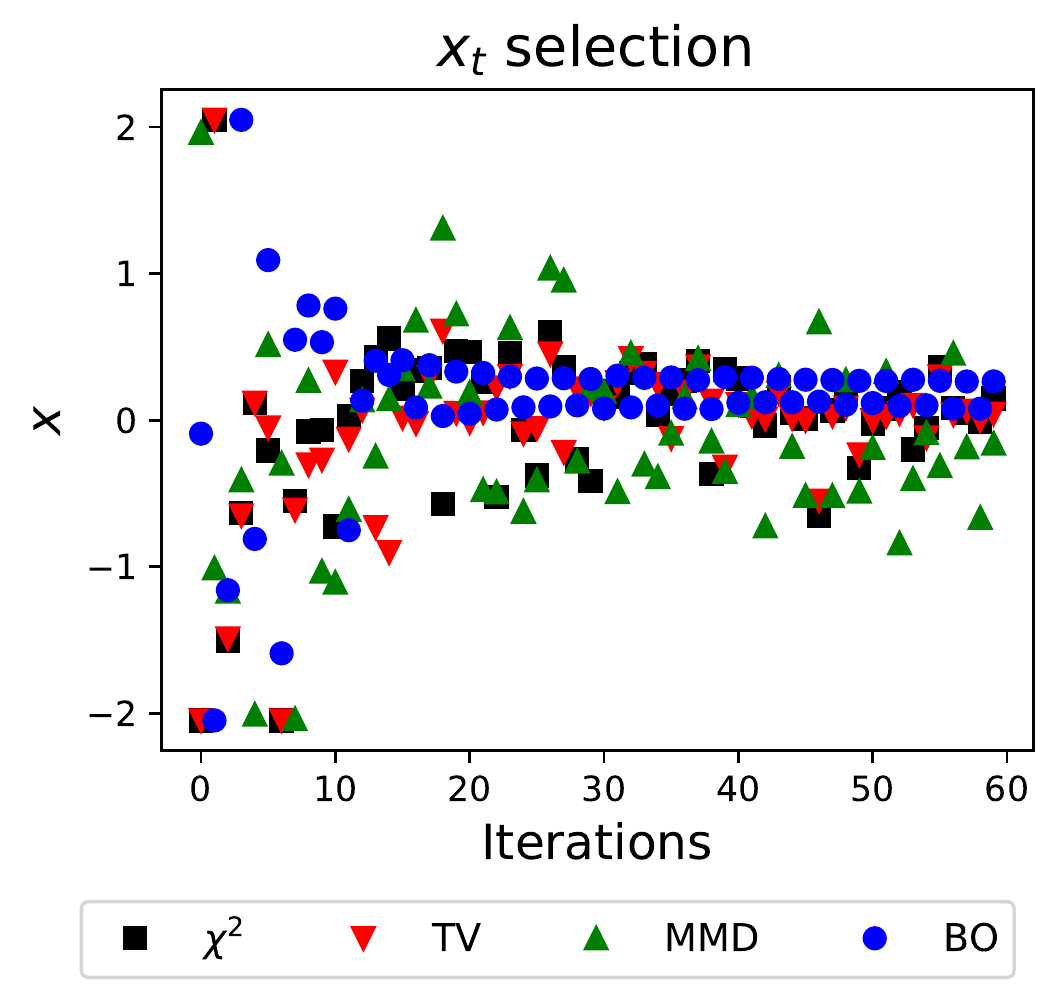}
\includegraphics[width=0.33\textwidth]{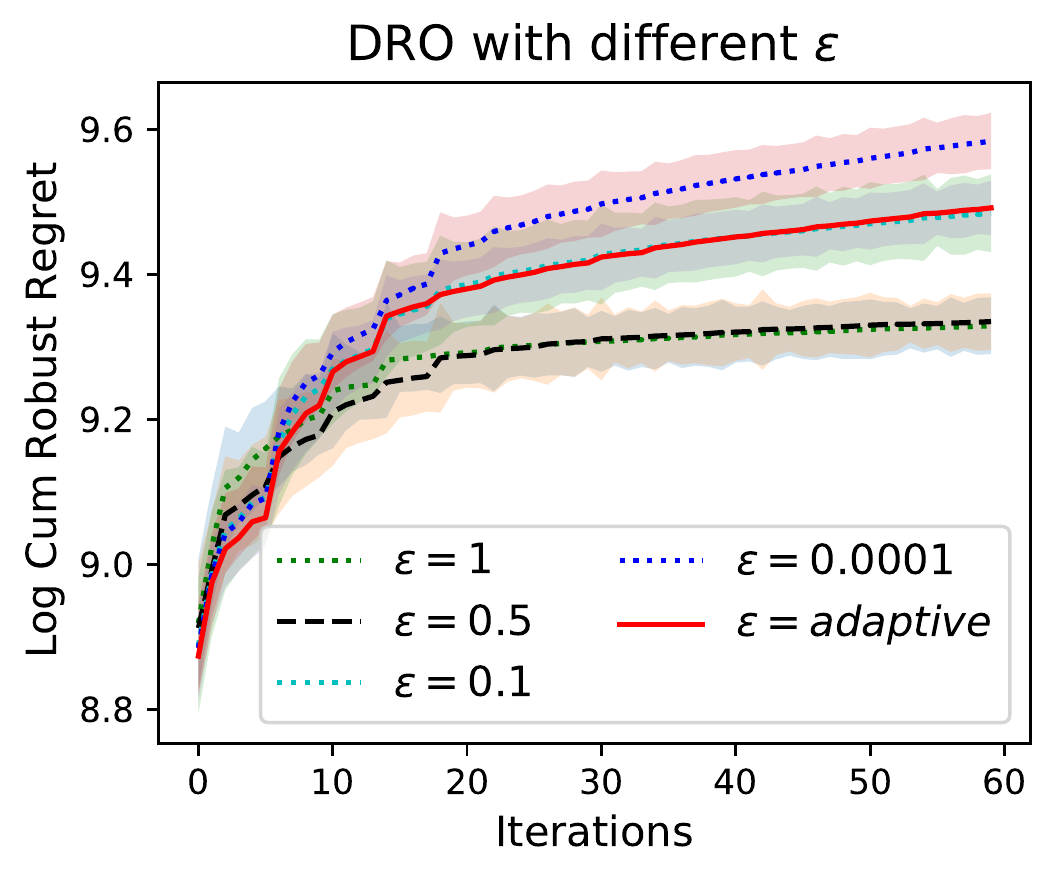}
\vspace{-6pt}
\caption{Stochastic and DRO solutions are different. Our method using $\epsilon=\{0.5,1\}$ result in the best performance.} \label{fig:sto_dro_different}
\end{subfigure}
\vspace{-6pt}
\begin{subfigure}[b]{1\textwidth}
\includegraphics[width=0.32\textwidth]{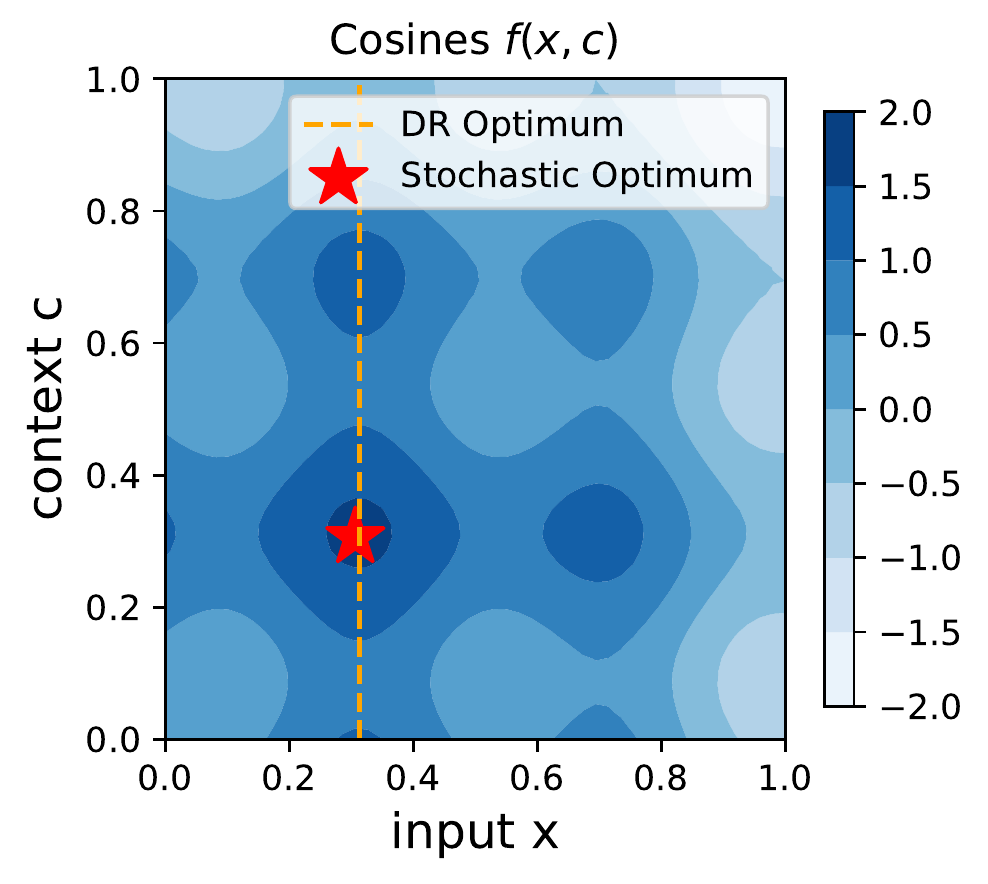}
\includegraphics[width=0.30\textwidth]{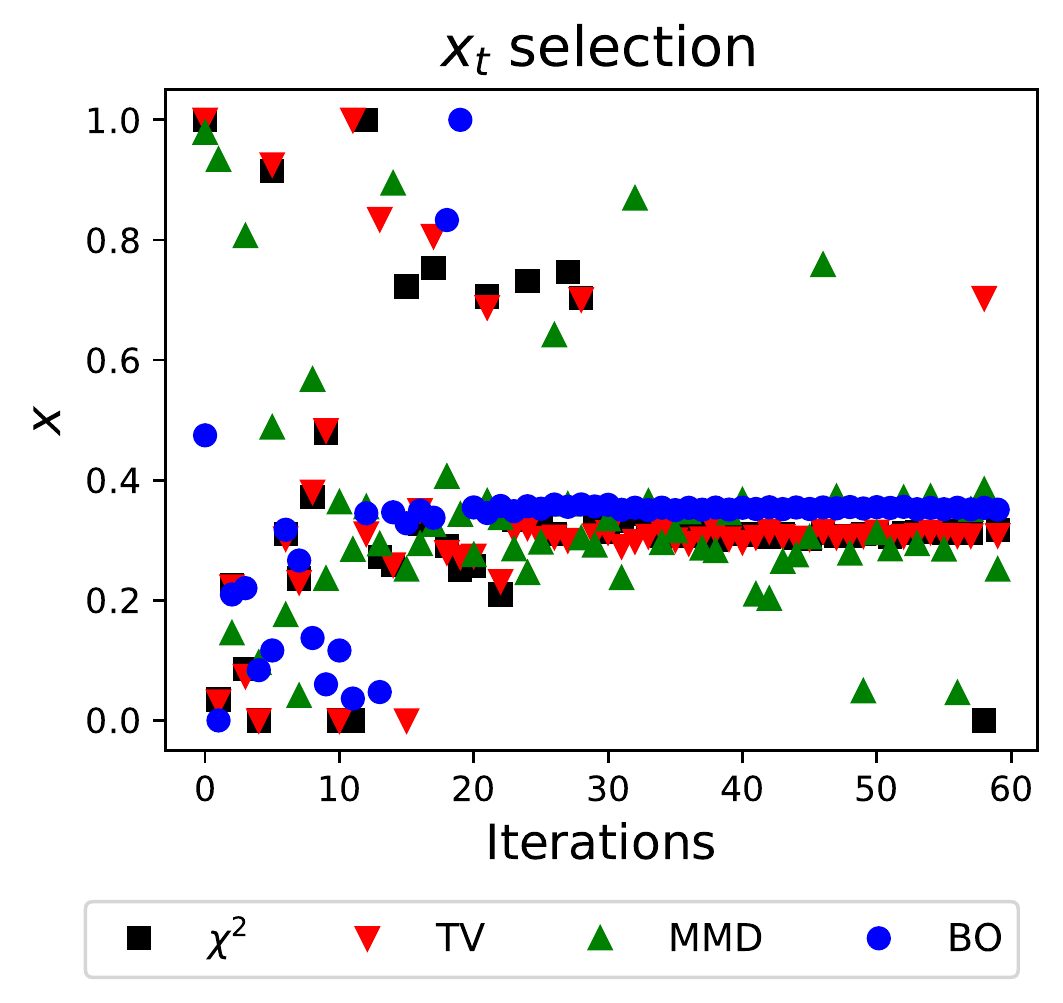}
\includegraphics[width=0.33\textwidth]{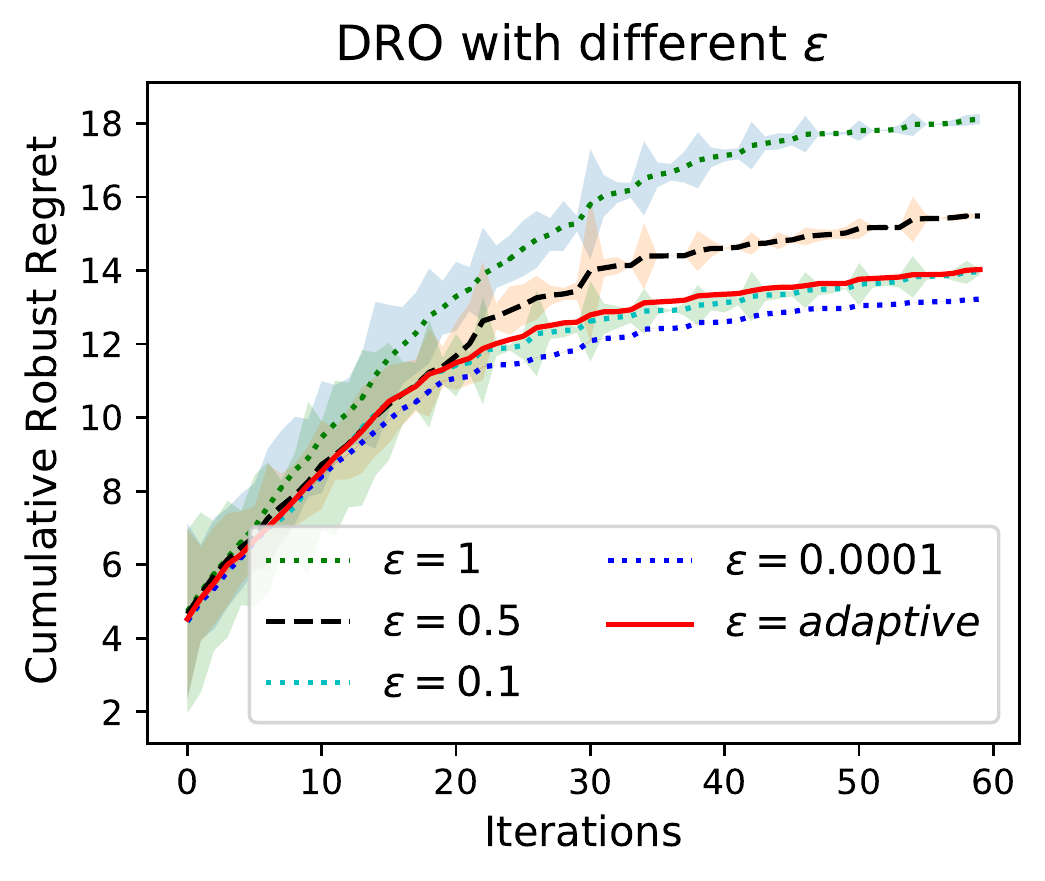}
\vspace{-6pt}
\caption{Stochastic and DRO solutions are coincide. Our method with $\epsilon \rightarrow 0$ is the best.} \label{fig:sto_dro_same}
\end{subfigure}
\vspace{-8pt}
\caption{Two settings in DRO when the stochastic solution and robust solution are different (\textit{top}) and identical (\textit{bottom}). \textit{Left}: original function $f(\bx,c)$. \textit{Middle}: selection of input  $\bx_t$  over iterations. \textit{Right}: performance with different $\epsilon$.} \label{fig:ablation_study} 
\vspace{-7pt}
\end{figure*}

\subsection{Convergence Analysis} \label{sec:convergence_analysis}
One of the main advantages of \citet{kirschner2020distributionally} is the choice of MMD makes the regret analysis simpler due to the nice structure and properties of MMD. In particular, the MMD is well-celebrated for a $O(t^{-1/2})$ convergence where no such results exist for $\varphi$-divergences. However, using Theorem \ref{theorem:main}, we can show a regret bound for the Total Variation with a simple boundedness assumption and show how one can extend this result to other $\varphi$-divergences. 
We begin by defining the \textit{robust regret}, $R_T$, with $\varphi$-divergence balls:
\begin{align}
    R_T(\varphi) = \sum_{t=1}^T \inf_{q \in B_{ \phi}^t} \E_{q(c)}[f(\bx_t^{\ast}, c)] - \inf_{q \in B_{\phi}^{t}} \E_{q(c)}[f(\bx_t,c)],  \label{eq:robust_regret}
\end{align}
where $\bx_t^{\ast} = \arg\max_{\bx \in \mathcal{X}}\inf_{q \in B_{\varepsilon, \phi}^t}\E_{q(c)}[f(\bx,c)]$.  We use $\bK_t$ to denote the generated kernel matrix from dataset $D_t = \braces{(\bx_i,c_i)}_{i=1}^t \subset \mathcal{X} \times \mathcal{C}$. 
we now introduce a standard quantity in regret analysis in BO is the \textit{maximum information gain}: $\gamma_t = \max_{D \subset \mathcal{X} \times \mathcal{C} : \card{D} = t}\log \det \bracket{ \idenmat_t + \sigma^{-2}_f \bK_t}$ where $\bK_t=\left[k\left([\bx_{i},c_i],[\bx_{j},c_j]\right)\right]_{\forall i,j\le t}$ is the covariance matrix and $\sigma^{2}_f$ is the output noise variance.
\begin{theorem}[$\varphi$-divergence Regret]\label{theorem_phi_regret}
Suppose the target function is bounded, meaning that $M = \sup_{(\bx,c) \in \mathcal{X} \times \mathcal{C}}\card{f(\bx,c)} < \infty$ and suppose $f$ has bounded RKHS norm with respect to $k$. For any lower semicontinuous convex $\varphi: \mathbb{R} \to (-\infty,\infty]$ with $\varphi(1) = 0$, if there exists a monotonic invertible function $\Gamma_{\varphi}: [0,\infty) \to \mathbb{R}$ such that $\operatorname{TV}(p,q) \leq \Gamma_{\varphi}(\mathsf{D}_{\varphi}(p,q))$, the following holds
\begin{align*}
    R_T(\varphi) \leq \frac{ \sqrt{8 T \beta_T \gamma_T}}{\log(1 + \sigma^{-2}_f)}  + \bigl(2M + \sqrt{\beta_T} \bigr) \sum_{t=1}^T \Gamma_{\varphi}( \varepsilon_t),
\end{align*}
with probability $1 - \delta$, where $\beta_t = 2 ||f||^2_k + 300 \gamma_t \ln^3(t/\delta)$,  $\gamma_t$ is the maximum information gain as defined above, and $\sigma_f$ is the standard deviation of the output noise.
\end{theorem}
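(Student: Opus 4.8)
The plan is to adapt the GP-UCB regret analysis of \citet{Srinivas_2010Gaussian} to the robust objective, isolating the effect of the $\varphi$-divergence ball through the hypothesised comparison inequality. I would begin by invoking the standard RKHS confidence bound: since $f$ has bounded RKHS norm, with probability $1-\delta$ the event $\card{f(\bx,c) - \mu_t(\bx,c)} \le \sqrt{\beta_t}\,\sigma_t(\bx,c)$ holds simultaneously for all $\bx,c,t$, with $\beta_t = 2\|f\|_k^2 + 300\gamma_t \ln^3(t/\delta)$. Writing $\mathrm{ucb}_t(\bx,c) = \mu_t(\bx,c) + \sqrt{\beta_t}\sigma_t(\bx,c)$, on this event $f \le \mathrm{ucb}_t \le f + 2\sqrt{\beta_t}\sigma_t$ pointwise, and the acquisition step selects $\bx_t$ as the maximiser of the robust surrogate $\mathcal{R}_t[h] := \inf_{q\in B_\phi^t}\E_{q}[h(\cdot)]$ applied to $h = \mathrm{ucb}_t(\bx,\cdot)$.

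The central observation, and the only place the hypothesis on $\Gamma_\varphi$ enters, is a conversion step: every $q \in B_\phi^t$ satisfies $\operatorname{TV}(q,p_t) \le \Gamma_\varphi(\mathsf{D}_\varphi(q,p_t)) \le \Gamma_\varphi(\varepsilon_t)$ by monotonicity of $\Gamma_\varphi$, so for any function $h$ bounded by $B$ one has $\big| \E_q[h] - \E_{p_t}[h]\big| \le 2B\,\Gamma_\varphi(\varepsilon_t)$. Since $p_t \in B_\phi^t$, this sandwiches the robust value between the reference expectation and a controlled perturbation, $\E_{p_t}[h] - 2B\,\Gamma_\varphi(\varepsilon_t) \le \mathcal{R}_t[h] \le \E_{p_t}[h]$. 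This is precisely what replaces the closed-form inner solution that the MMD afforded \citet{kirschner2020distributionally}, and it converts the infinite-dimensional ball constraint into a uniform additive error.

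With these two ingredients I would bound the instantaneous robust regret $r_t = \mathcal{R}_t[f(\bx_t^{\ast},\cdot)] - \mathcal{R}_t[f(\bx_t,\cdot)]$. Monotonicity of $\mathcal{R}_t$ together with $f \le \mathrm{ucb}_t$ gives $\mathcal{R}_t[f(\bx_t^{\ast},\cdot)] \le \mathcal{R}_t[\mathrm{ucb}_t(\bx_t^{\ast},\cdot)]$, and optimality of $\bx_t$ gives $\mathcal{R}_t[\mathrm{ucb}_t(\bx_t^{\ast},\cdot)] \le \mathcal{R}_t[\mathrm{ucb}_t(\bx_t,\cdot)]$. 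Applying the conversion step to pass from $\mathcal{R}_t$ back to $\E_{p_t}$ (once for $f$, bounded by $M$, and once for $\mathrm{ucb}_t$, bounded by $M+\sqrt{\beta_t}$) collapses the chain to a bound of the form $r_t \le 2\sqrt{\beta_t}\,\E_{p_t}[\sigma_t(\bx_t,\cdot)] + (2M+\sqrt{\beta_t})\,\Gamma_\varphi(\varepsilon_t)$, the coefficient matching the statement once the normalisation of $\operatorname{TV}$ is fixed. Summing over $t$, bounding $\beta_t \le \beta_T$, and applying Cauchy--Schwarz with the maximum-information-gain inequality $\sum_t \sigma_t^2 \le 2\gamma_T / \log(1+\sigma_f^{-2})$ of \citet{Srinivas_2010Gaussian} then yields the first term $\frac{\sqrt{8T\beta_T\gamma_T}}{\log(1+\sigma_f^{-2})}$ and the additive penalty $(2M+\sqrt{\beta_T})\sum_t \Gamma_\varphi(\varepsilon_t)$.

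The step I expect to be the main obstacle is charging the per-round confidence width to the information gain. The robust manipulation naturally leaves a width averaged over contexts, $\E_{p_t}[\sigma_t(\bx_t,\cdot)]$, whereas the information-gain lemma controls $\sum_t \sigma_t(\bx_t,c_t)^2$ at the \emph{queried} augmented points $(\bx_t,c_t)\in\mathcal{X}\times\mathcal{C}$; reconciling the two---by bounding the averaged posterior standard deviation by the maximal one and invoking the max over size-$t$ subsets in the definition of $\gamma_t$, or by absorbing the discrepancy between the sampling distribution and $p_t$---is the delicate accounting, and is where the boundedness $M$ and the kernel normalisation $\sigma_t \le 1$ are used. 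Everything else is routine: the $\Gamma_\varphi$ hypothesis is verified for the named divergences by Pinsker-type inequalities (e.g.\ $\Gamma_{\operatorname{TV}}(\varepsilon)=\varepsilon/2$ and $\Gamma_{\mathrm{KL}}(\varepsilon)=\sqrt{\varepsilon/2}$), which in particular recovers the Total Variation regret as a special case.
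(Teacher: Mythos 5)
Your proposal is correct and follows essentially the same route as the paper: the hypothesis on $\Gamma_{\varphi}$ is used exactly as in the paper's proof, namely to establish the ball inclusion $\braces{q : \mathsf{D}_{\varphi}(q,p_t) \leq \varepsilon_t} \subseteq \braces{q : \operatorname{TV}(q,p_t) \leq \Gamma_{\varphi}(\varepsilon_t)}$ and thereby reduce the general case to the Total Variation regret bound, which is then handled by the standard GP-UCB chain (confidence bound, optimality of the acquisition, Cauchy--Schwarz with the maximum information gain). The only cosmetic difference is that you control the TV-ball infimum by a generic perturbation bound $\card{\E_q[h] - \E_{p_t}[h]} \leq 2B\,\Gamma_{\varphi}(\varepsilon_t)$, whereas the paper uses the exact closed-form dual of the TV-robust objective from Example \ref{example_tv}; the delicate bookkeeping you flag (charging $\E_{p_t}[\sigma_t(\bx_t,\cdot)]$ to $\gamma_T$, and the fact that the implemented acquisition penalises the oscillation of $\mu_t$ rather than of the full UCB) is handled in the paper with the same constants $2M + \sqrt{\beta_T}$ you arrive at.
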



The full proof can be found in the Appendix Section \ref{supp-formal}. We first remark that with regularity assumptions on $f$, sublinear analytical bounds for $\gamma_T$ are known for a range of kernels, e.g., given $ \mathcal{X} \times \mathcal{C} \subset \realset^{d+1}$ we have for the RBF kernel, $\gamma_T \le \mathcal{O} \left( \log(T)^{d+2} \right)$ or for the Matérn kernel with $\nu >1$, $\gamma_T \le \mathcal{O} \left( T^{\frac{(d+1)(d+2)}{2\nu+(d+1)(d+2)}}(\log T) \right)$. The second term in the bound is directly a consequence of DRO and by selecting $\varepsilon_t = 0$, it will vanish since any such $\Gamma_{\varphi}$ will satisfy $\Gamma_{\varphi}(0) = 0$. To ensure sublinear regret, we can select $\varepsilon_t = \Gamma_{\varphi}^{-1}\bracket{\frac{1}{\sqrt{t} + \sqrt{t+1}}}$, noting that the second term will reduce to  $\sum_{t=1}^T\varepsilon_t \leq \sqrt{T}$. Finally, we remark that the existence of $\Gamma_{\varphi}$ is not so stringent since for a wide choices of $\varphi$, one can find inequalities between the Total Variation and $D_{\varphi}$, to which we refer the reader to \citet{sason2016f}. For the examples discussed above, we can select $\Gamma_{\varphi}(t) = t$ for the TV. For the $\chi^2$ and $\operatorname{KL}$ cases, one can choose $\Gamma_{\chi^2}(b) = 2\sqrt{\frac{b}{1+b}}$ and $\Gamma_{\operatorname{KL}}(b) = 1 - \exp(-b)$. 





\section{Experiments \label{sec:experiments}}

\textbf{Experimental setting.} The experiments are repeated using $30$ independent runs. We set $|C|=30$ which should be sufficient to draw $c \stackrel{\textrm{iid}}{\sim} q$ in one-dimensional space to compute Eqs. (\ref{eq:alpha_x2},\ref{eq:alpha_tv}). We optimize the GP hyperparameter (e.g., learning rate) by maximizing the GP log marginal likelihood [\citenum{Rasmussen_2006gaussian}]. We will release the Python implementation code in the final version.

\textbf{Baselines.} We consider the following baselines for comparisons.
\textit{Rand}: we randomly select $\bx_t$ irrespective of $c_t$.
\textit{BO}: we follow the GP-UCB [\citenum{Srinivas_2010Gaussian}] to perform standard Bayesian optimization (ignoring the context $c_t$). The selection at each iteration is $\bx_t = \argmax_{\bx} \mu(\bx) + \beta_t \sigma(\bx)$.
\textit{Stable-Opt}: we consider the worst-case robust optimization presented in \citet{bogunovic2018adversarially}. The selection at each iteration $\bx_t = \argmax_{\bx} \argmin_{c} \mu(\bx,c) + \beta_t \sigma(\bx,c)$.
\textit{DRBO MMD} [\citenum{kirschner2020distributionally}]:  Since there is no official implementation available, we have tried our best to re-implement the algorithm. 



We consider the popular benchmark functions\footnote{\hyperlink{https://www.sfu.ca/~ssurjano/optimization.html}{https://www.sfu.ca/~ssurjano/optimization.html} } with different dimensions $d$. To create a context variable $c$, we pick the last dimension of these functions to be the context input while the remaining $d-1$ dimension becomes the input $\bx$.

\subsection{Ablation Studies} \label{sec:ablation_study}
To gain understanding into how our framework works, we consider two popular settings below. 

\textbf{DRBO solution is different from stochastic solution.} In Fig. \ref{fig:sto_dro_different}, the vanilla BO tends to converge greedily toward the stochastic solution (non-distributionally robust) $\argmax_{\bx} f(\bx,\cdot)$. Thus, BO keeps exploiting in the locality of $\argmax_{\bx} f(\bx,\cdot)$ from iteration $15$. On the other hand, all other DRBO methods will keep exploring to seek for the distributionally robust solutions. Using the high value of $\epsilon_t \in \{0.5,1\}$ will result in the best performance.

\textbf{DRBO solution is identical to stochastic solution.} When the stochastic and robust solutions coincide at the same input $\bx^*$, the solution of BO will be equivalent to the solution of DRBO methods. This is demonstrated by Fig. \ref{fig:sto_dro_same}. Both stochastic and robust approaches will quickly identify the optimal solution (see the $\bx_t$ selection). We learn empirically that setting $\epsilon_t \rightarrow 0$ will lead to the best performance. This is because the DRBO setting will become the standard BO.

The best choice of $\epsilon$ depends on the property of the underlying function, e.g., the gap between the stochastic and DRBO solutions. In practice, we may not be able to identify these scenarios in advance. Therefore, we can use the adaptive value of $\epsilon_t$ presented in Section \ref{sec:convergence_analysis}. Using this adaptive setting, the performance is stable, as illustrated in the figures.



\subsection{Computational efficiency} \label{sec_experiment_computational_comparison}
\begin{wrapfigure}{R}{0.47\textwidth}
\vspace{-28pt}
\includegraphics[width=0.46\textwidth]{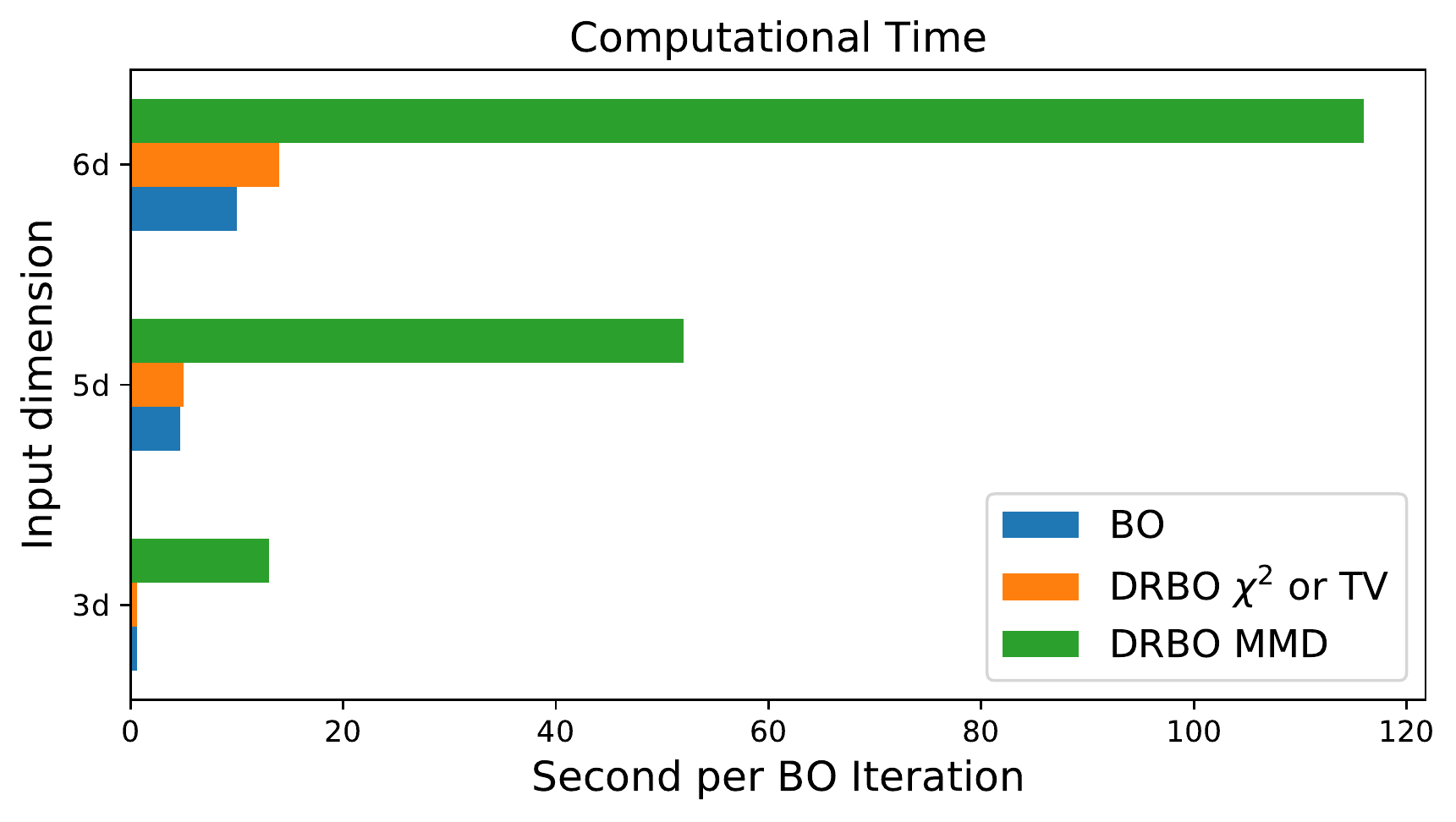}
\vspace{-5pt}
\caption{We compare the computational cost across methods. Our proposed DRBO using $\chi^2$ and TV take similar cost per iteration which is significantly lower than the DRBO MMD [\citenum{kirschner2020distributionally}]. } \label{fig:time_comparison}
\vspace{-0pt}
\end{wrapfigure}

\begin{figure*}
\vspace{-5pt}
\begin{centering}
\includegraphics[width=0.322\textwidth]{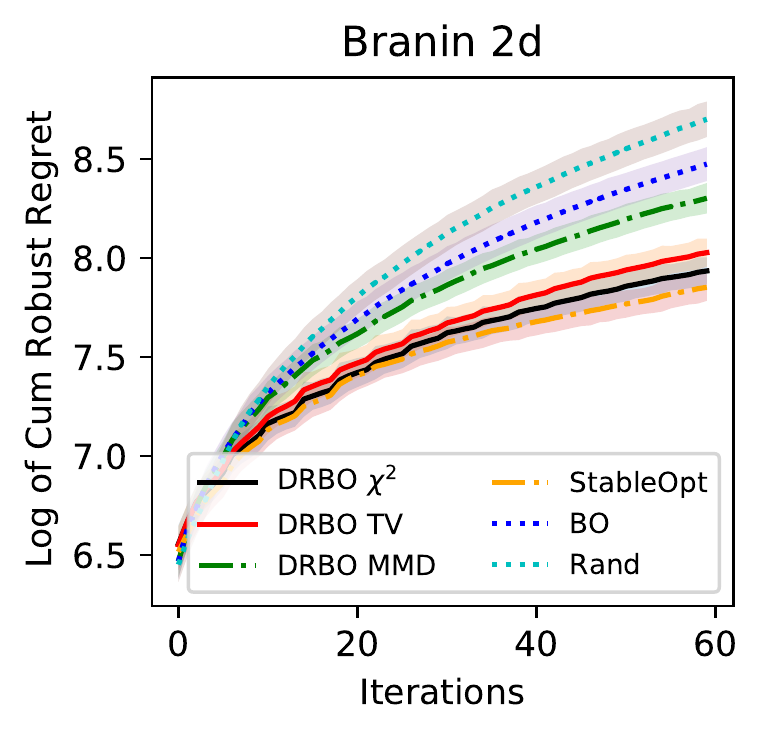}
\includegraphics[width=0.322\textwidth]{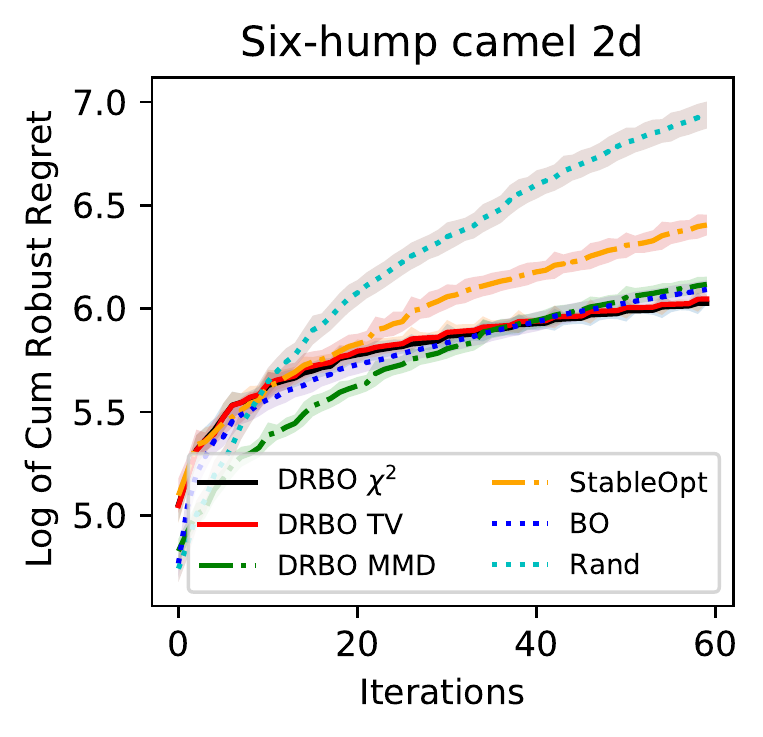}
\includegraphics[width=0.34\textwidth]{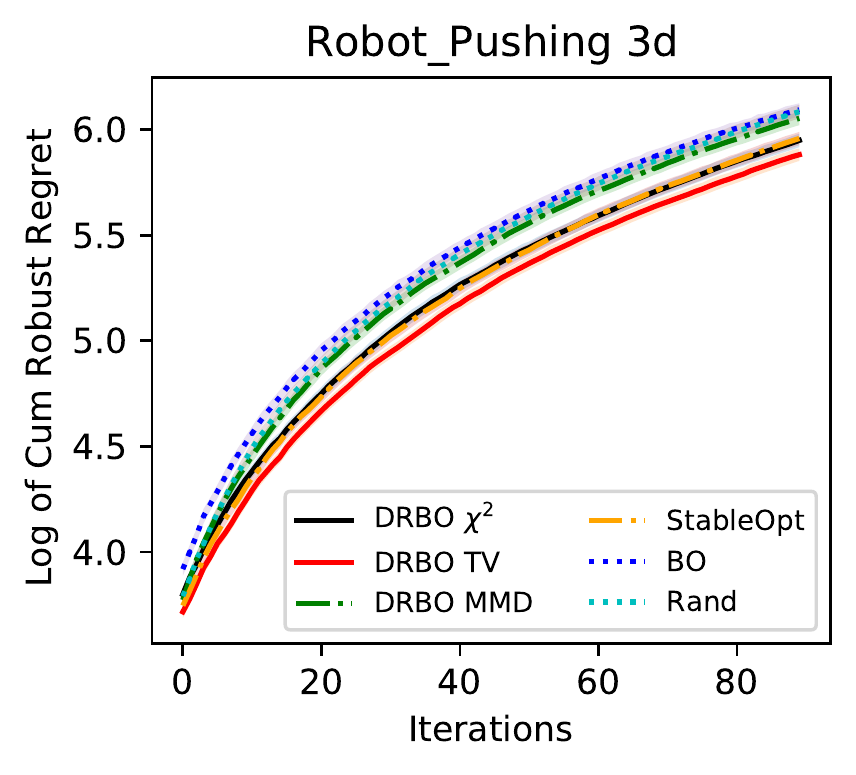}
\end{centering}
\vspace{-15pt}
\caption{Cumulative robust regret across algorithms. The results show that the proposed $\chi^2$ and TV achieve the best performance across benchmark functions. Random and vanilla BO approaches perform poorly which do not take into account the robustness criteria. Best viewed in color.} \label{fig:benchmark_functions}
\vspace{-15pt}
\end{figure*}

\begin{figure*}[h]
\vspace{-4pt}
\begin{centering}
\includegraphics[width=0.33\textwidth]{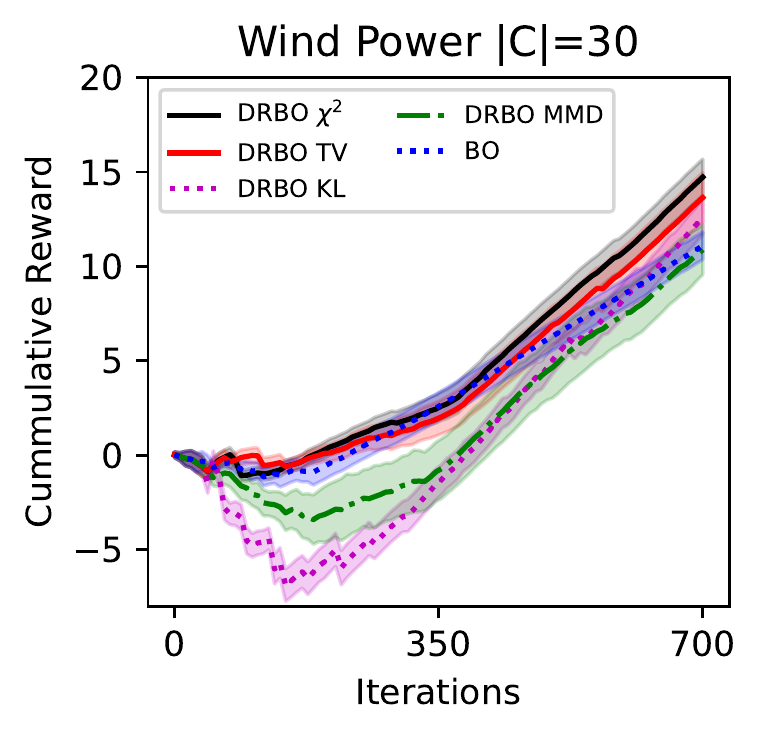}
\includegraphics[width=0.32\textwidth]{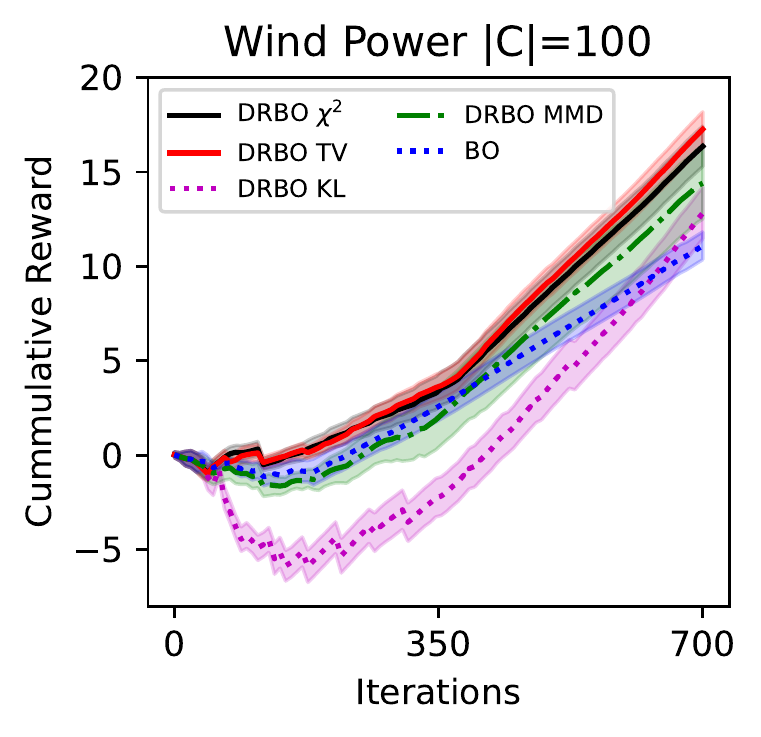}
\includegraphics[width=0.32\textwidth]{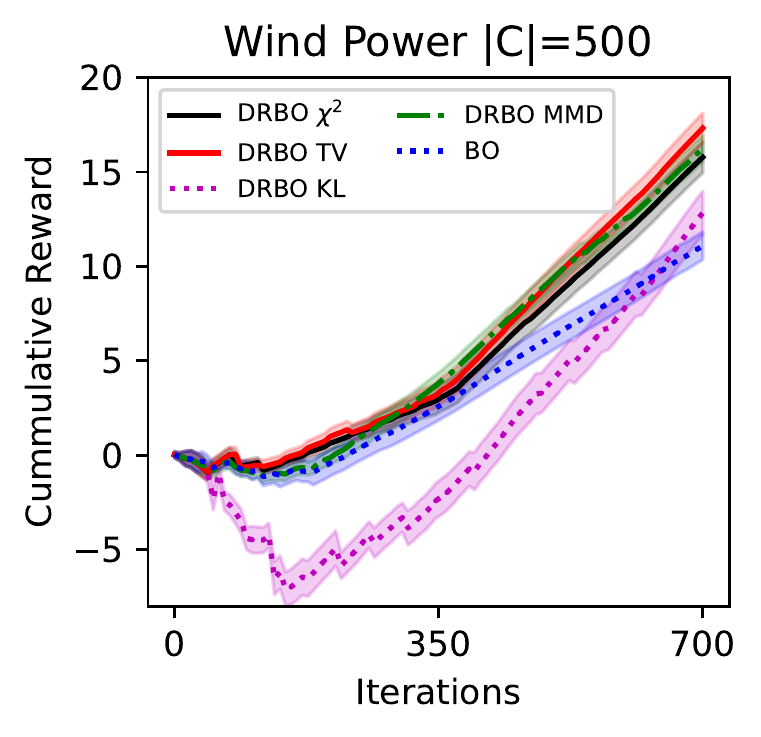}
\end{centering}
\vspace{-10pt}
\caption{All divergences improve with larger $|C|$. However, MMD comes with the quadratic cost.} \label{fig:windpower_wrt_C}
\vspace{-6pt}
\end{figure*}

The key benefit of our framework is simplifying the existing intractable computation by providing the closed-form solution. Additional to improving the quality, we demonstrate this advantage in terms of computational complexity. Our main baseline for comparison is the MMD [\citenum{kirschner2020distributionally}]. As shown in Fig. \ref{fig:time_comparison}, our DRBO is consistently faster than the constraints linear programming approximation used for MMD. This gap is substantial in higher dimensions. In particular, as compared to \citet{kirschner2020distributionally}, our DRBO is $5$-times faster in \textit{5d} and $10$-times faster in \textit{6d}. 



\subsection{Optimization performance comparison}
We compare the algorithms in Fig. \ref{fig:benchmark_functions} using the robust (cumulative) regret defined in Eq. (\ref{eq:robust_regret}) which is commonly used in DRO literature [\citenum{kirschner2020distributionally,DRBQO}]. The random approach does not make any intelligent information in making decision, thus performs the worst. While BO performs better than random, it is still inferior comparing to other distributionally robust optimization approaches. The reason is that BO does not take into account the context information in making the decision. The StableOpt [\citenum{bogunovic2018adversarially}] performs relatively well that considers the worst scenarios in the subset of predefined context. This predefined subset can not cover all possible cases as opposed to the distributional robustness setting.

The MMD approach [\citenum{kirschner2020distributionally}] needs to solve the inner adversary  problem using  linear programming with convex constraints, additional to the main optimization step. As a result, the performance of MMD is not as strong as our TV and $\chi^2$. Our proposed approach does not suffer this pathology and thus scale well in continuous and high dimensional settings of context input $c$. 
\vspace{-1em}

\paragraph{Real-world functions.} 
We consider the deterministic version of the robot pushing objective from \citet{Wang_2017Max}. The goal is to find a good pre-image for pushing an object to a target location. The 3-dimensional function takes as input the robot location $(r_x, r_y) \in [-5, 5]^2$ and pushing duration $r_t \in [1, 30] $. 
We follow \citet{bogunovic2018adversarially} to twist this problem in which there is uncertainty regarding the precise target location,
so one seeks a set of input parameters that is robust against a number of different potential pushing duration which is a context. 

We perform an experiment on Wind Power dataset [\citenum{bogunovic2018adversarially}] and vary the context dimensions $|C| \in \{30, 100,500\}$ in Fig. \ref{fig:windpower_wrt_C}. When $|C|$ enlarges, our DRBO $\chi^2$, TV and KL improves. However, the performances do not improve further when increasing $|C|$ from $100$ to $500$. Similarly, MMD improves with $|C|$, but it comes with the quadratic cost w.r.t. $|C|$. Overall, our proposed DRBO still performs favourably in terms of optimization quality and computational cost
than the MMD.



\section{Conclusions, Limitations and Future works \label{sec:conclusion_limitation}}
\vspace{-1em}
In this work, we showed how one can study the DRBO formulation with respect to $\varphi$-divergences and derived a new algorithm that removes much of the computational burden, along with a sublinear regret bound. We compared the performance of our method against others, and showed that our results unveil a deeper connection between regularization and robustness, which serves useful conceptually.
\vspace{-5pt}
\paragraph{Limitations and Future Works} One of the limitations of our framework is in the choice of $\varphi$, for which we provide no guidance. For different applications, different choices of $\varphi$ would prove to be more useful, the study of which we leave for future work.
\section*{Acknowledgements}
We would like to thank the anonymous reviewers for providing feedback.

\bibliographystyle{plainnat}


\bibliography{vunguyen, references}

\newpage
\onecolumn

\clearpage

\onecolumn

\section{Gaussian Process Modeling with Input $x$ and Context $c$}

\paragraph{Gaussian Processes.} We follow a popular choice in BO [\citenum{Shahriari_2016Taking}] to use GP as a surrogate model for optimizing  $f$. A GP [\citenum{Rasmussen_2006gaussian}] defines a probability distribution
over functions $f$ under the assumption that any subset of points
$\left\{ (\bx_{i},f(\bx_{i})\right\} $ is normally distributed. Formally,
this is denoted as:
\begin{align*}
f(\bx)\sim \text{GP}\left(m(\bx),k(\bx,\bx')\right),
\end{align*}
where $m\left(\bx\right)$ and $k\left(\bx,\bx'\right)$ are the mean
and covariance functions, given by $m(\bx)=\mathbb{E}\left[f(\bx)\right]$
and $k(\bx,\bx')=\mathbb{E}\left[(f(\bx)-m(\bx))(f(\bx')-m(\bx'))^{T}\right]$. For predicting $f_{*}=f\left(\bx_{*}\right)$ at a new data point $\bx_{*}$,  
the conditional probability
follows a univariate Gaussian distribution as $p\bigl(f_{*}\mid \bx_*, [\bx_1...\bx_N],[y_1,...y_N] \bigr)\sim\mathcal{N}\left(\mu\left(\bx_{*}\right),\sigma^{2}\left(\bx_{*}\right)\right)$.
Its mean and variance are given by:
\begin{minipage}{0.49\textwidth}
\begin{align}
\mu\left(\bx_{*}\right)= & \mathbf{k}_{*,N}\mathbf{K}_{N,N}^{-1}\mathbf{y}, \label{eq:gp_mean}
\end{align}
\end{minipage}
\begin{minipage}{0.49\textwidth}
\begin{align}
\sigma^{2}\left(\bx_{*}\right)= & k_{**}-\mathbf{k}_{*,N}\mathbf{K}_{N,N}^{-1}\mathbf{k}_{*,N}^{T}\label{eq:gp_var}
\end{align}
\end{minipage}

where $k_{**}=k\left(\bx_{*},\bx_{*}\right)$, $\bk_{*,N}=[k\left(\bx_{*},\bx_{i}\right)]_{\forall i\le N}$ and $\bK_{N,N}=\left[k\left(\bx_{i},\bx_{j}\right)\right]_{\forall i,j\le N}$.
 As GPs give full uncertainty information with any prediction, they
provide a flexible nonparametric prior for Bayesian optimization.
We refer to \citet{Rasmussen_2006gaussian}
for further details on GPs.

\paragraph{Gaussian Process with Input $x$ and Context $c$.}
We have presented the definition of Gaussian process where the input includes a variable $\bx \in \realset^d $. 
Given the additional context variable $c \in \realset$, it is natural to consider a GP model in which the input is the concatenation of $[\bx, c] \in \realset^{d+1}$. In particular, we can write a GP [\citenum{Rasmussen_2006gaussian}] as:
\begin{align*}
f( [\bx,c] )\sim \text{GP}\left(m([\bx,c]),k([\bx,c],[\bx',c'])\right),
\end{align*}
where $m\left([\bx,c]\right)$ and $k\left([\bx,c],[\bx',c']\right)$ are the mean
and covariance functions. For predicting $f_{*}=f\left([\bx_{*}, c_*]\right)$ at a new data point $[\bx_{*},c_*]$,  
the conditional probability
follows a univariate Gaussian distribution as $p\bigl(f_{*}\mid [\bx_*, c_*],...  \bigr)\sim\mathcal{N}\left(\mu\left( [\bx_{*},c_*] \right),\sigma^{2}\left([\bx_{*},c_*]\right)\right)$. 
The mean and variance are given by:

\begin{minipage}{0.49\textwidth}
\begin{align}
\mu\left([\bx_{*},c_*]\right)= & \mathbf{k}_{*,N}\mathbf{K}_{N,N}^{-1}\mathbf{y}, \label{eq:gp_mean_xc}
\end{align}
\end{minipage}
\begin{minipage}{0.49\textwidth}
\begin{align}
\sigma^{2}\left([\bx_{*},c_*]\right)= & k_{**}-\mathbf{k}_{*,N}\mathbf{K}_{N,N}^{-1}\mathbf{k}_{*,N}^{T}\label{eq:gp_var_xc}
\end{align}
\end{minipage}

where $k_{**}=k\left([\bx_{*},c_*],[\bx_{*},c_*]\right)$, $\bk_{*,N}=[k\left([\bx_{*},c_*],[\bx_{i},c_i]\right)]_{\forall i\le N}$ and $\bK_{N,N}=\left[k\left([\bx_{i},c_i],[\bx_{j},c_j]\right)\right]_{\forall i,j\le N}$.

\section{Proofs of Main Results}\label{supp-formal}
In the sequel, when we say a function is measurable, we attribute it with respect to the Borel $\sigma$-algebras based on the Polish topologies. We remark that a similar proof to ours has appeared in \citet{ahmadi2012entropic}, which is not specific to BO objective yet also we require compactness of the set $\mathcal{C}$. We only require compactness of $\mathcal{C}$ to get compactness of $\Delta(\mathcal{C})$ however similar arguments can be made when $\mathcal{C}$ is not compact using the vague topology such as in \citep{liu2018inductive, husain2019primal, husain2022adversarial}.
\begin{theorem} \textcolor{blue}{(Theorem \ref{theorem:main} in the main paper)}
Let $\phi: \mathbb{R} \to (-\infty, \infty]$ be a convex lower semicontinuous mapping such that $\phi(1) = 0$. For any $\epsilon > 0$, it holds that
\begin{align*}
    &\sup_{\bx \in \mathcal{X}} \inf_{q \in B_{\phi}^{t}(p)} \E_{q(c)}[f(\bx,c)] =\sup_{\bx \in \mathcal{X},  \lambda \geq 0, b \in \mathbb{R}} \bracket{ b - \lambda \epsilon_t - \lambda \E_{p_{t}(c)}\left[ \varphi^{\star} \bracket{\frac{b - f(\bx,c)}{\lambda} }\right]  }.
\end{align*}
\end{theorem}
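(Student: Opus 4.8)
The plan is to fix $\bx \in \mathcal{X}$, analyze the inner minimization $\inf_{q \in B_{\varphi}^{t}(p_t)} \E_{q(c)}[f(\bx,c)]$ as a convex program over the likelihood ratio, and only take the supremum over $\bx$ at the very end (this commutes trivially with the identity, since the right-hand side is also an unconstrained supremum over $\bx$). Introducing $L = dq/dp_t$, absolute continuity together with the probability-measure requirement becomes $L \geq 0$ and $\E_{p_t}[L] = 1$; the objective becomes the linear functional $\E_{p_t}[L \cdot f(\bx,\cdot)]$; and the divergence ball becomes $\E_{p_t}[\varphi(L)] \leq \varepsilon_t$. This is a convex minimization in $L$, since $L \mapsto \E_{p_t}[\varphi(L)]$ is convex because $\varphi$ is, and the objective is linear. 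The nonnegativity constraint $L\geq 0$ is absorbed into the conjugate via the convention $\operatorname{dom}\varphi \subseteq [0,\infty)$ used in the definition of $\varphi^{\star}$.

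First I would form the Lagrangian, attaching a multiplier $\lambda \geq 0$ to the inequality $\E_{p_t}[\varphi(L)] - \varepsilon_t \leq 0$ and a free multiplier $b \in \mathbb{R}$ to the equality $\E_{p_t}[L] = 1$, retaining $L \geq 0$ as a hard pointwise constraint, to get $\mathcal{L}(L,\lambda,b) = \E_{p_t}[L f(\bx,\cdot)] + \lambda(\E_{p_t}[\varphi(L)] - \varepsilon_t) - b(\E_{p_t}[L] - 1)$. Taking $\sup_{\lambda \geq 0,\, b}$ first recovers the primal (it penalizes every infeasible $L$ by $+\infty$). The central step is to swap this with $\inf_{L \geq 0}$ to reach the dual $\sup_{\lambda \geq 0,\, b}\inf_{L \geq 0}\mathcal{L}$. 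After the swap the inner infimum decouples pointwise: $\inf_{L\geq 0}\E_{p_t}[L f + \lambda \varphi(L) - b L] = \E_{p_t}\bigl[\inf_{\ell \geq 0}(\ell f(\bx,c) + \lambda \varphi(\ell) - b\ell)\bigr]$, and for $\lambda > 0$ the scalar problem is exactly the negative perspective of the conjugate, namely $\inf_{\ell}(\lambda \varphi(\ell) - \ell(b - f)) = -\lambda \varphi^{\star}\bigl((b - f)/\lambda\bigr)$. Reassembling yields $b - \lambda \varepsilon_t - \lambda \E_{p_t}[\varphi^{\star}((b - f(\bx,c))/\lambda)]$, and applying the outer $\sup_{\bx}$ gives the claim.

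The main obstacle is rigorously justifying the two interchanges. For the minimax swap I would invoke strong duality for this infinite-dimensional convex program: Slater's condition holds because $L \equiv 1$ (that is, $q = p_t$) is strictly feasible, since $\E_{p_t}[\varphi(1)] = \varphi(1) = 0 < \varepsilon_t$; and the compactness of $\Delta(\mathcal{C})$ inherited from the assumed compactness of $\mathcal{C}$, together with the lower semicontinuity of $\varphi$, supplies the weak-$\ast$ compactness needed for a Sion-type minimax theorem. This is precisely the role the excerpt ascribes to compactness. For the interchange of the infimum and the integral I would appeal to the standard interchange theorem for integral functionals (measurable selection, e.g.\ Rockafellar--Wets), which applies because $f$ is measurable and bounded and $\varphi^{\star}$ is a proper closed convex function; boundedness of $f$ also keeps the dual objective finite and the conjugate evaluated within its effective domain.

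Finally I would treat the boundary case $\lambda = 0$, where $\lambda \varphi^{\star}(\cdot/\lambda)$ must be read as the perspective function of $\varphi^{\star}$ (its recession function in the limit $\lambda \downarrow 0$); this preserves upper semicontinuity of the dual objective in $\lambda$ so the supremum behaves correctly and recovers the known $\varphi$-divergence DRO duality referenced after the theorem. The conjugate computation itself is routine once these analytic points are secured, so I expect essentially all the difficulty to reside in the minimax swap and the inf--integral interchange.
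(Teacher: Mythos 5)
Your proposal is correct and follows essentially the same route as the paper: a Lagrangian relaxation of the divergence-ball constraint, a minimax swap justified by convexity, lower semicontinuity, and the compactness of $\Delta(\mathcal{C})$, followed by the Fenchel conjugate of $\varphi$ to produce the auxiliary variable $b$. The only difference is organizational: you dualize the normalization constraint $\E_{p_t}[L]=1$ explicitly to obtain $b$ and carry out the pointwise conjugate computation yourself (also treating $\lambda=0$ via the recession function, which the paper glosses over), whereas the paper keeps $q\in\Delta(\mathcal{C})$ as a hard constraint and obtains $b$ by citing the known restricted Fenchel dual of the $\varphi$-divergence; your version is, if anything, more self-contained.
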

\begin{proof}
For a fixed $\bx \in \mathcal{X}$, we first introduce a Lagrangian variable $\lambda \geq 0$ that acts to enforce the ball constraint  $\mathsf{D}_{\varphi}(p,q) \leq \varepsilon$:
\begin{align}
    \inf_{q \in B_{\phi}^{t}(p)} \E_{q(c)}[f(\bx,c)] &= \inf_{q \in \Delta(\mathcal{C})} \sup_{\lambda \geq 0} \bracket{ \E_{q(c)}[f(\bx,c)] - \lambda \bracket{\varepsilon_t - \mathsf{D}_{\varphi}(q,p_t)}}\\
    &\stackrel{(1)}{=}  \sup_{\lambda \geq 0} \inf_{q \in \Delta(\mathcal{C})} \bracket{ \E_{q(c)}[f(\bx,c)] - \lambda \bracket{\varepsilon_t - \mathsf{D}_{\varphi}(q,p_t)}}\\
    &= \sup_{\lambda \geq 0} \bracket{ -\lambda \varepsilon_t - \sup_{q \in \Delta(\mathcal{C})} \bracket{\E_{q(c)}[-f(\bx,c)] - \lambda \mathsf{D}_{\varphi}(q,p_t) } }\\
    &\stackrel{(2)}{=} \sup_{\lambda \geq 0} \bracket{-\lambda \varepsilon_t - \inf_{b \in \mathbb{R}}\bracket{\lambda \E_{p_{t}(c)}\left[\varphi^{\star} \bracket{\frac{b - f(\bx,c)}{\lambda}}\right] - b } }\\
    &= \sup_{ \lambda \geq 0, b \in \mathbb{R}} \bracket{ b - \lambda \epsilon_t - \lambda \E_{p_{t}(c)}\left[ \varphi^{\star} \bracket{\frac{b - f(\bx,c)}{\lambda} }\right]  }
\end{align}
$(1)$ is due to Fan's minimax Theorem \citep[Theorem~2]{fan1953minimax} noting that for any $\bx \in \mathcal{X}$, the mapping $q \mapsto \E_{q(c)}[f(\bx,c)]$ is linear and for any $\varphi$ chosen as stated in the Theorem, the mapping $q \mapsto \mathsf{D}_{\varphi}(q,p_t)$ is convex and lower semi-continuous. Furthermore noting that $\mathcal{C}$ is compact, we also have that $\Delta(\mathcal{C})$ is compact [\citenum{villani2008optimal}]. $(2)$ is due to a standard result due to the (restricted) Fenchel dual of the $\varphi$-divergence, see Eq. (22) of \citet{liu2018inductive} for example. We state the result in the following lemma, which depends on the convex conjugate $\varphi^{\star}(u) = \sup_{u'} \bracket{u u' - \varphi(u')}$.
\begin{lemma}
For any measurable function $h: \mathcal{C} \to \mathbb{R}$ and convex lower semi-continuous function $\varphi: \mathbb{R} \to (-\infty,\infty]$ with $\varphi(1) = 0$, it holds that
\begin{align*}
    &\sup_{q \in \Delta(\mathcal{C})} \bracket{\E_{q(c)}[h(c)] - \lambda \mathsf{D}_{\varphi}(q,p) } = \inf_{b \in \mathbb{R}}\bracket{\lambda \E_{p_{t}(c)}\left[ \varphi^{\star}\bracket{\frac{b + h(c)}{\lambda}} \right] - b},
\end{align*}
where $p \in \Delta(\mathcal{C})$ and $\lambda > 0$.
\end{lemma}
\end{proof}
For each of the specific derivations below, we recall the standard derivations for $\varphi^{\star}$, which can be found in many works, for example in \cite{nowozin2016f}.

\begin{example}[$\chi^2$-divergence] \textcolor{blue}{(Example \ref{example_chi2} in the main paper)}
If $\phi(u) = (u-1)^2$, then we have
\begin{align*}
    &\sup_{\bx \in \mathcal{X}} \inf_{q \in B_{\phi}^{t}(p)} \E_{c \sim q}[f(\bx,c)] = \sup_{\bx \in \mathcal{X}} \bracket{ \E_{p_{t}(c)}[f(\bx,c)] - \sqrt{\varepsilon_t\operatorname{Var}_{p_{t}(c)}[f(\bx,c)] }}.
\end{align*}
\end{example}
\begin{proof}
In this case we have $(\lambda \varphi)^{\star}(u) = \frac{u^2}{4\lambda } + u$ and so we have
\begin{align*}
    \sup_{b \in \mathbb{R}} \bracket{b -  \E_{p_{t}(c)}\left[ \frac{(b - f(\bx,c))^2}{4\lambda} + b - f(\bx,c)  \right] } &= \sup_{b \in \mathbb{R}} \bracket{ \E_{p_{t}(c)}\left[ f(\bx,c)\right] - \E_{p_{t}(c)}\left[ \left(b - f(\bx,c) \right)^2   \right] }\\
    &= \E_{p_{t}(c)}[f(\bx,c)] - \frac{1}{4\lambda} \inf_{b \in \mathbb{R}} \E_{p_{t}(c)}\left[ \left(b - f(\bx,c) \right)^2 \right]\\
    &= \E_{p_{t}(c)}[f(\bx,c)] - \frac{1}{4\lambda} \operatorname{Var}_{p_{t}(c)}[f(\bx,c)].
\end{align*}
Combining this with the original objective and by Theorem \ref{theorem:main} yields
\begin{align*}
\inf_{q \in B_{\phi}^{t}(p)} \E_{c \sim q}[f(\bx,c)] &=    \sup_{  \lambda \geq 0, b \in \mathbb{R}} \bracket{ b - \lambda \epsilon_t - \lambda \E_{p_{t}(c)}\left[ \varphi^{\star} \bracket{\frac{b - f(\bx,c)}{\lambda} }\right]  }\\ &=  \E_{p_{t}(c)}[f(\bx,c)] \nonumber  
- \inf_{\lambda \geq 0} \bracket{\lambda \varepsilon_t + \frac{1}{4\lambda}\operatorname{Var}_{p_{t}(c)}[f(\bx,c)]  }\\
 &=      \E_{p_{t}(c)}[f(\bx,c)] \nonumber 
 - \sqrt{\varepsilon_t \operatorname{Var}_{p_{t}(c)}[f(\bx,c)] }.
\end{align*}
where the last equation is using the arithmetic and geometric means inequality.
\end{proof}

\begin{example}[Total Variation]\textcolor{blue}{(Example \ref{example_tv} in the main paper)}
\label{mainthm:tvd}
If $\varphi(u) = \card{u-1}$, then we have
\begin{align*}
    &\sup_{\bx \in \mathcal{X}} \inf_{q \in B_{\phi}^{t}(p)} \E_{c \sim q}[f(\bx,c)] =  \sup_{\bx \in \mathcal{X}} \bracket{ \E_{p(c)}[f(\bx,c)] - \frac{\varepsilon_t}{2}\bracket{\sup_{c \in \mathcal{C}}f(\bx,c) - \inf_{c \in \mathcal{C}}f(\bx,c)  } }.
\end{align*}
\end{example}
\begin{proof}
In this case, the conjugate of $\varphi$ is $(\lambda \varphi)^{\star}(u) = 0$ if $\card{u} \leq \lambda$ and $+\infty$ otherwise. Therefore, when considering the right-hand side of Theorem \ref{theorem:main}, we will require $\lambda$ to be larger than $\card{b - f(\bx,c)}$ for all $c \in \mathcal{C}$ for the expression to be finite. In this case, for a fixed $b \in \mathbb{R}$, the optimization over $\lambda \geq 0$ becomes
\begin{align}
    &\sup_{b \in \mathbb{R}} \sup_{\lambda \geq 0} \bracket{b - \lambda \varepsilon_t -  \lambda \E_{p_{t}(c)}\left[ \varphi^{\star} \bracket{\frac{b - f(\bx,c)}{\lambda}} \right] } \\&=  \sup_{b \in \mathbb{R}}  \bracket{ \E_{p_{t}(c)}[f(\bx,c)] -  \sup_{c \in \mathcal{C}} \card{b - f(\bx,c)}\varepsilon_t }\\
    &= \E_{p_{t}(c)}[f(\bx,c)] - \inf_{b \in \mathbb{R}} \sup_{c \in \mathcal{C}} \card{b - f(\bx,c)} \varepsilon_t.
\end{align}
We need the following Lemma to proceed.
\begin{lemma}
For any set of numbers $A \subset \mathbb{R}$, let $\overline{a}, \underline{a} \in A$ be the maximum and minimal elements. It then holds that
\begin{align*}
    \inf_{b \in \mathbb{R}} \sup_{a \in A} \card{b - a} = \frac{1}{2} \card{\overline{a} - \underline{a}}.
\end{align*}
\end{lemma}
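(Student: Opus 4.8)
The plan is to reduce the inner supremum to the two extremal points and then minimize a one-dimensional piecewise-linear function. First I would observe that for any fixed $b \in \mathbb{R}$ the map $a \mapsto \card{b - a}$ is convex, so its supremum over $A$ (which is contained in the interval $[\underline{a}, \overline{a}]$) is attained at an endpoint of that interval; since both endpoints lie in $A$ by assumption, this yields the identity $\sup_{a \in A} \card{b - a} = \max\braces{\card{b - \overline{a}}, \card{b - \underline{a}}}$. This collapses the problem to the scalar optimization $\inf_{b \in \mathbb{R}} \max\braces{\card{b - \overline{a}}, \card{b - \underline{a}}}$.

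For the lower bound I would apply the triangle inequality: for every $b$ we have $\card{b - \overline{a}} + \card{b - \underline{a}} \geq \card{\overline{a} - \underline{a}}$, and since the maximum of two nonnegative numbers is at least their average, $\max\braces{\card{b - \overline{a}}, \card{b - \underline{a}}} \geq \tfrac{1}{2}\card{\overline{a} - \underline{a}}$. Taking the infimum over $b$ preserves this inequality. For the matching upper bound I would simply exhibit the midpoint $b^{\star} = \tfrac{1}{2}(\overline{a} + \underline{a})$, for which $\card{b^{\star} - \overline{a}} = \card{b^{\star} - \underline{a}} = \tfrac{1}{2}\card{\overline{a} - \underline{a}}$, so the objective attains the value $\tfrac{1}{2}\card{\overline{a} - \underline{a}}$ there.

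Combining the two bounds gives the claimed equality. The only subtlety — hardly an obstacle — is the first reduction step, which requires noting that $A \subseteq [\underline{a}, \overline{a}]$ and that the convex function $\card{b - \cdot}$ attains its maximum over this interval at an endpoint belonging to $A$; everything afterwards is elementary. This lemma then plugs directly into the Total Variation computation, identifying $\inf_{b \in \mathbb{R}} \sup_{c \in \mathcal{C}} \card{b - f(\bx,c)}$ with $\tfrac{1}{2}\bracket{\sup_{c \in \mathcal{C}} f(\bx,c) - \inf_{c \in \mathcal{C}} f(\bx,c)}$, where boundedness of $f$ guarantees the extremal values are finite.
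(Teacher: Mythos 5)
Your proof is correct and follows essentially the same route as the paper's: reduce the inner supremum to the two extremal points $\overline{a},\underline{a}$ and then observe the midpoint $b^{\star}=\tfrac{1}{2}(\overline{a}+\underline{a})$ is optimal. You supply the lower-bound half (triangle inequality plus $\max \geq$ average) explicitly, which the paper leaves implicit, but the argument is the same in substance.
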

\begin{proof}
First note that for any $b \in \mathbb{R}$, we have that $\sup_{a \in A} \card{b - a} = \max\bracket{\card{\overline{a} - b}, \card{\underline{a} - b}}$. The outer inf can be solved by setting $b = \frac{\overline{a} + \underline{a}}{2}$.
\end{proof}
The proof then concludes by noting setting $A = \braces{f(\bx,c) : c \in \mathcal{C}}$.
\end{proof}

\begin{lemma}[Theorem 6 from \citet{Srinivas_2010Gaussian}] \label{lemma_bound_mu_sigma_f}
Let $\delta \in (0,1)$. Assume the noise variable 
$\epsilon_t$ is uniformly bounded by $\sigma_f$. Define $\beta_t = 2 ||f||^2_k + 300 \gamma_t \ln^3(t/\delta)$. Then,
\begin{align}
    P \Bigl(\forall T, \forall \bx, \bigl| \mu_T(\bx) - f(\bx) \bigr| \le \beta_T^{\frac{1}{2}} \sigma_T(\bx) \Bigr) \geq 1- \delta.
\end{align}
\end{lemma}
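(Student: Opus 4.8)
The plan is to treat this as the standard self-calibrating confidence bound for the agnostic (fixed-RKHS) setting and to reproduce the argument of \citet{Srinivas_2010Gaussian} (their Theorem 6). The central idea is that although $f$ is a fixed function in the RKHS of $k$ rather than a sample drawn from the GP prior, the GP posterior mean $\mu_T$ coincides with the kernel ridge regression estimator of $f$ with regularization $\sigma_f^2$. Hence the error $\mu_T(\bx) - f(\bx)$ can be decomposed into a deterministic approximation (bias) term and a stochastic noise term, each controlled separately and each expressible as a multiple of the posterior standard deviation $\sigma_T(\bx)$, so that the two contributions assemble into the stated $\beta_T = 2\|f\|_k^2 + 300\,\gamma_T\ln^3(T/\delta)$.

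First I would write the posterior mean explicitly as $\mu_T(\bx) = \bk_T(\bx)^\top (\bK_T + \sigma_f^2 \idenmat)^{-1} \by_T$ and substitute $\by_T = \mathbf{f}_T + \bepsilon_T$, where $\mathbf{f}_T$ collects the noiseless values $f(\bx_i)$ and $\bepsilon_T$ the noise vector. This splits the error $\mu_T(\bx) - f(\bx)$ into a deterministic \emph{bias} term $\bk_T(\bx)^\top (\bK_T + \sigma_f^2 \idenmat)^{-1}\mathbf{f}_T - f(\bx)$ and a stochastic \emph{noise} term $\bk_T(\bx)^\top (\bK_T + \sigma_f^2\idenmat)^{-1}\bepsilon_T$. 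For the bias term I would use the reproducing property $f(\bx) = \langle f, k(\bx,\cdot)\rangle_k$ together with Cauchy--Schwarz in the RKHS to bound its magnitude by $\|f\|_k\,\sigma_T(\bx)$, which is exactly where the $2\|f\|_k^2$ contribution to $\beta_T$ originates.

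The crux is the noise term. Because the design points $\bx_1,\dots,\bx_T$ are chosen adaptively (each $\bx_t$ depends on past observations), $\bepsilon_T$ is not independent of the weight vector $(\bK_T + \sigma_f^2\idenmat)^{-1}\bk_T(\bx)$, so a naive Hoeffding bound fails. Instead I would invoke a self-normalized martingale concentration inequality: the noise, being uniformly bounded by $\sigma_f$ and therefore sub-Gaussian, forms a martingale-difference sequence with respect to the natural filtration, and a method-of-mixtures supermartingale argument yields a bound of the form $|\text{noise}| \le \sigma_T(\bx)\sqrt{2\log\det(\idenmat + \sigma_f^{-2}\bK_T) + C\log(1/\delta)}$ uniformly over $\bx$. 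Replacing $\log\det(\idenmat+\sigma_f^{-2}\bK_T)$ by its maximum $\gamma_T$ and absorbing the logarithmic factors produces the $300\,\gamma_T\ln^3(T/\delta)$ contribution to $\beta_T$. This self-normalized step is the main obstacle, since it must simultaneously handle the adaptivity of the queries and the infinite-dimensional feature map, and it is precisely where the information gain $\gamma_T$ enters through the log-determinant.

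Finally, to upgrade from a single horizon to the simultaneous statement over all $T$, I would apply a union bound across time steps with a summable failure schedule (allocating probability of order $\delta/t^2$ at step $t$, so that the total failure probability stays below $\delta$); this only inflates the $\log(1/\delta)$ factor into the $\ln^3(t/\delta)$ form already present in $\beta_t$. Uniformity over $\bx$ requires no additional union bound, because the self-normalized inequality above is global rather than pointwise: the posterior standard deviation $\sigma_T(\bx)$ self-normalizes the deviation at every $\bx$ at once. Combining the bias and noise bounds and collecting constants then gives $|\mu_T(\bx) - f(\bx)| \le \beta_T^{1/2}\sigma_T(\bx)$ with the stated $\beta_T$, completing the argument.
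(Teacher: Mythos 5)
You could not have known this, but the paper does not prove this lemma at all: it is quoted verbatim, with the proof deferred entirely to \citet{Srinivas_2010Gaussian} (their Theorem~6). There is therefore no in-paper argument to compare against, and the right benchmark for your sketch is the original source.

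Measured against that, your sketch is a correct outline of \emph{a} proof, but not of \emph{the} proof. Srinivas et al.\ obtain Theorem~6 without any method of mixtures: they control the adaptively weighted noise sum through martingale concentration combined with a layered union-bound construction, and it is precisely that cruder scheme that manufactures the $\ln^3(t/\delta)$ factor and the loose constant $300$. What you describe is the later self-normalized route (Abbasi-Yadkori's method of mixtures, kernelized by Chowdhury and Gopalan), which yields the strictly sharper confidence width $\bigl(\|f\|_k + \sigma_f\sqrt{2(\gamma_T + 1 + \ln(1/\delta))}\bigr)\,\sigma_T(\bx)$; since $(a+b)^2 \le 2a^2 + 2b^2$, this implies the stated $\beta_T$ a fortiori, so your argument does establish the lemma --- indeed a stronger version of it. Three details your write-up would have to carry, none fatal: (i) the method of mixtures does not transplant verbatim to infinite-dimensional feature maps; the kernelized version needs a regularization-mismatch device (the posterior variance computed with an inflated noise level) before the self-normalized norm can be converted into a multiple of $\sigma_T(\bx)$ --- your sketch silently assumes this conversion; (ii) your $\delta/t^2$ union bound over the horizon is redundant on this route, because the self-normalized supermartingale bound is already time-uniform (it is, however, essentially how the original proof handles uniformity in $T$, so it is harmless); (iii) your claim that the bias term ``is exactly where $2\|f\|_k^2$ originates'' needs the $(a+b)^2$ inflation above --- the bound $\lvert\text{bias}\rvert \le \|f\|_k\,\sigma_T(\bx)$ by itself produces a \emph{sum} of widths, not the stated additive split of $\beta_T$, and the absorption of the self-normalized logarithms into $300\,\gamma_T\ln^3(t/\delta)$ should be checked explicitly (it holds comfortably because the target constant is so loose, but it is a verification step, not automatic).
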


We first prove a regret bound for the total variation which will be instrumental in proving the regret bound for any general $\varphi$, given the existence of $\Gamma_{\varphi}$.

\begin{theorem}[Total Variation Regret] \label{thm_TV_Regret_appendix}
Let $M = \sup_{(\bx,c) \in \mathcal{X} \times \mathcal{C}}\card{f(\bx,c)}$ and suppose $f$ lives in an RKHS. If $\varphi(u) = \card{u-1}$, it then holds that
\begin{align*}
    R_T(\varphi) \leq \frac{ \sqrt{8 T \beta_T \gamma_T}}{\log(1 + \sigma_f^{-2})}  + \bigl(2M + \sqrt{\beta_T} \bigr) \sum_{t=1}^T \varepsilon_t,
\end{align*}
where $\sigma_f$ is the standard deviation of the output noise.
\end{theorem}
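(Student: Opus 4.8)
The plan is to bound each per-round term of the robust regret $R_T(\varphi)$ by splitting it into a standard GP-UCB regret component plus a correction term that captures the effect of the distributionally robust radius $\varepsilon_t$. First I would use the Total Variation result (Example \ref{example_tv}) to write both $\inf_{q}\E_{q}[f(\bx_t^{\ast},c)]$ and $\inf_{q}\E_{q}[f(\bx_t,c)]$ in the closed form $\E_{p_t(c)}[f(\bx,c)] - \tfrac{\varepsilon_t}{2}(\sup_c f(\bx,c) - \inf_c f(\bx,c))$. Since $\bx_t^{\ast}$ is the exact maximizer of the robust objective and $\bx_t$ is the GP-UCB choice, the difference of these two closed forms is what we must control at each $t$.

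Next I would invoke Lemma \ref{lemma_bound_mu_sigma_f} to get, with probability $1-\delta$, the uniform confidence bound $|\mu_T(\bx) - f(\bx)| \le \beta_T^{1/2}\sigma_T(\bx)$ for all $t$ and all $\bx$. The idea is to replace $f$ by its GP surrogate in the robust objective: the acquisition function $\alpha^{TV}$ in Eq.~\eqref{eq:alpha_tv} is exactly the robust objective evaluated on the UCB surrogate $\mu_t + \sqrt{\beta_t}\sigma_t$. Because $\bx_t$ maximizes $\alpha^{TV}$, a standard optimism argument shows that the surrogate value at $\bx_t$ upper-bounds the surrogate value at $\bx_t^{\ast}$, which in turn upper-bounds the true robust objective at $\bx_t^{\ast}$. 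Subtracting the true robust value at $\bx_t$ then leaves a residual controlled by $2\sqrt{\beta_t}\sigma_t(\bx_t,c_t)$ (the usual GP-UCB instantaneous regret) plus the TV penalty discrepancy. The boundedness $M = \sup|f|$ lets me bound each $\sup_c f - \inf_c f$ term by $2M$, and the surrogate-versus-truth gap in the penalty contributes the $\sqrt{\beta_T}$ factor, giving the coefficient $(2M + \sqrt{\beta_T})$ multiplying $\sum_t \varepsilon_t$.

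To finish the GP-UCB part, I would sum $\sum_{t=1}^T 2\sqrt{\beta_t}\sigma_t(\bx_t,c_t)$ over the horizon. The standard device is Cauchy--Schwarz followed by the information-gain bound: $\sum_t \sigma_t^2 \le \tfrac{2}{\log(1+\sigma_f^{-2})}\gamma_T$, so that $\sum_t \sigma_t(\bx_t,c_t) \le \sqrt{T \sum_t \sigma_t^2}$ yields the $\tfrac{\sqrt{8T\beta_T\gamma_T}}{\log(1+\sigma_f^{-2})}$ term after bounding $\beta_t \le \beta_T$. Combining the two pieces gives exactly the claimed bound.

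The main obstacle I anticipate is handling the penalty term $\tfrac{\varepsilon_t}{2}(\sup_c f - \inf_c f)$ carefully, since it appears for \emph{both} $\bx_t$ and $\bx_t^{\ast}$ and must be controlled simultaneously when passing between the true function and its GP surrogate. The delicate point is that the optimism argument applies cleanly to the mean term $\E_{p_t}[f]$ via the UCB, but the range term is not monotone under the surrogate substitution; I would need to absorb the difference between the true range and the surrogate range uniformly using the confidence bound, which is precisely where the $\sqrt{\beta_T}$ contribution to the coefficient of $\sum_t \varepsilon_t$ arises. Ensuring these bounds hold uniformly over $c \in \mathcal{C}$ (not just at the observed $c_t$) is the technical crux.
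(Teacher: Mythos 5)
Your proposal is correct and follows essentially the same route as the paper's proof: apply the closed-form TV identity to both robust values, use the GP-UCB confidence bound and the optimism of the $\alpha^{TV}$ maximizer, absorb the surrogate-versus-true range discrepancy into the $(2M+\sqrt{\beta_T})\sum_t\varepsilon_t$ term, and sum the posterior standard deviations via Cauchy--Schwarz and the information-gain bound. The ``delicate point'' you flag about the non-monotonicity of the range penalty under the surrogate substitution is precisely where the paper's steps $(4)$--$(5)$ do the work, so your plan matches the paper in both structure and the key technical step.
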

\begin{proof}
We first define the GP predictive mean and variance as
\begin{align*}
    \mu_t(\bx,c) &= k_t([\bx,c])^{\intercal}\bracket{ \bK_t + \idenmat_t \sigma^2_f}^{-1}y_t\\
    \sigma_t(\bx,c)^2 &= k([\bx,c],[\bx,c])- k_t([\bx,c])^{\intercal}\bracket{ \bK_t + \idenmat_t \sigma^2_f}^{-1}k_t([\bx,c]),
\end{align*}
which are defined in Eqs. (\ref{eq:gp_mean_xc},\ref{eq:gp_var_xc}) where $\bK_t=\left[k\left([\bx_{i},c_i],[\bx_{j},c_j]\right)\right]_{\forall i,j\le t}$. The proof begins by first bounding the regret at time $t$ using a standard argument with a slight modification that uses our results.
\begin{align}
    r_t=&\inf_{q : \mathsf{D}_{\varphi}(q,p_t) \leq \varepsilon_t} \E_{q(c)}[f(\bx^{*},c)] - \inf_{q : \mathsf{D}_{\varphi}(q,p_t) \leq \varepsilon_t} \E_{q(c)}[f(\bx_t,c)] \nonumber \\
    &\stackrel{(1)}{\leq} \E_{p_t(c)}[f(\bx^{*},c) - \mu(\bx^{*},c)] + \E_{p_t(c)}[\mu(\bx^{*},c)]  - \inf_{q : \mathsf{D}_{\varphi}(q,p_t) \leq \varepsilon_t} \E_{q(c)}[f(\bx_t,c)] \nonumber\\
    &\stackrel{(2)}{\leq} \sqrt{\beta_t}\E_{p_t(c)}[\sigma_t(\bx^*,c)]] +  \E_{p_t(c)}[\mu(\bx^{*},c)]  - \inf_{q : \mathsf{D}_{\varphi}(q,p_t) \leq \varepsilon_t} \E_{q(c)}[f(\bx_t,c)] \nonumber\\
    &= \sqrt{\beta_t}\E_{p_t(c)}[\sigma_t(\bx^*,c)] + \E_{p_t(c)}[\mu(\bx^{*},c)]  - \inf_{q : \mathsf{D}_{\varphi}(q,p_t) \leq \varepsilon_t} \E_{q(c)}[f(\bx_t,c)] \nonumber\\
    &\stackrel{(3)}{=} \sqrt{\beta_t}\E_{p_t(c)}[\sigma_t(\bx^*,c)] + \E_{p_t(c)}[\mu(\bx^{*},c)]  - \E_{p_t(c)}[f(\bx_t,c)] + \frac{\varepsilon_t}{2} \bracket{\sup_{c \in \mathcal{C}} f(\bx_t,c) - \inf_{c \in \mathcal{C}} f(\bx_t,c) } \nonumber\\
    &= \sqrt{\beta_t}\E_{p_t(c)}[\sigma_t(\bx^*,c)]  + \E_{p_t(c)}[\mu(\bx^{*},c) - f(\bx_t,c)] + \frac{\varepsilon_t}{2} \bracket{\sup_{c \in \mathcal{C}} f(\bx_t,c) - \inf_{c \in \mathcal{C}} f(\bx_t,c) } \nonumber \\
    &\stackrel{(4)}{\leq} \sqrt{\beta_t}\E_{p_t(c)}[\sigma_t(\bx_t,c)]  + \E_{p_t(c)}[\mu(\bx_t,c) - f(\bx_t,c)] + \frac{\varepsilon_t}{2} \bracket{\sup_{c \in \mathcal{C}} f(\bx_t,c) - \inf_{c \in \mathcal{C}} f(\bx_t,c) } \nonumber \\
    & \qquad + \frac{\epsilon_t}{2} \E_{p_t(c)} \bigl[ \max \mu_t(\bx^*,c) - \min \mu_t(\bx^*,c)  \bigr] \nonumber \\
    &\stackrel{(5)}{\leq} 2\sqrt{\beta_t}\E_{p_t(c)}[\sigma_t(\bx_t,c)]  + \varepsilon_t \bigl(2M + \sqrt{\beta_t} \bigr), \label{eq:r_t}
\end{align}
where $(1)$ holds due to selecting $p_t$ and introducing $\E_{q(c)}[\mu(\bx^{*},c)]$, $(2)$ is due to the result that $\card{f(\bx,c) - \mu(\bx,c)} \leq \sqrt{ \beta_t } \sigma_t(\bx,c)$ for all $x, c \in \mathcal{X} \times \mathcal{C}$ with probability at least $1 - \delta$ as stated in Lemma \ref{lemma_bound_mu_sigma_f}. $(3)$ is due to Theorem \ref{mainthm:tvd}. Step $(4)$ is due to the choice of $\bx_t$ since it satisfies:
\begin{align*}
\E_{p_t(c)} \Bigl[ \mu(\bx_t,c) + \sqrt{\beta_t}\sigma_t(\bx_t,c) -\frac{\epsilon_t}{2} \big( \max \mu_t(\bx_t,c) - \min \mu_t(\bx_t,c) \big) \Bigl]\geq \\ 
\E_{p_t(c)} \Bigl[\mu(\bx,c) + \sqrt{\beta_t}\sigma_t(\bx,c) -\frac{\epsilon_t}{2} \big( \max \mu_t(\bx,c) - \min \mu_t(\bx,c) \big) \Bigr],
\end{align*}
for all $\bx \in \mathcal{X}$ and therefore 
\begin{align*}
\E_{p_t(c)} \Bigl[\mu(\bx_t,c) + \sqrt{\beta_t}\sigma_t(\bx_t,c) -\frac{\epsilon_t}{2} \big( \max \mu_t(\bx_t,c) - \min \mu_t(\bx_t,c) \big) \Bigr] \geq \\ 
\E_{p_t(c)} \Bigl[ \mu(\bx^*,c) + \sqrt{\beta_t}\sigma_t(\bx^*,c) -\frac{\epsilon_t}{2} \big( \max \mu_t(\bx^*,c) - \min \mu_t(\bx^*,c) \big) \Bigr].
\end{align*}
Thus, we have step (4) as
\begin{align*}
    \E_{p_t(c)}[\mu(\bx^{*},c)]  + \sqrt{\beta_t}\E_{p_t(c)}[\sigma_t(\bx^{*},c)]     \leq & \,\, \E_{p_t(c)}[\mu(\bx_t,c)] + \sqrt{\beta_t}\E_{p_t(c)}[\sigma_t(\bx_t,c)] \\
    & + \frac{\epsilon_t}{2} \E_{p_t(c)} \bigl[ \max \mu_t(\bx^*,c) - \min \mu_t(\bx^*,c)  \bigr].
\end{align*}
Finally, $(5)$ holds due to another application of $\card{f(\bx,c) - \mu(\bx,c)} \leq \beta_t \sigma_t(\bx,c)$ for all $[\bx, c] \in [\mathcal{X} \times \mathcal{C}]$ from Lemma \ref{lemma_bound_mu_sigma_f}, $\E_{p_t(c)} \bigl[ \sup_{c \in \mathcal{C}} f(\bx_t,c) - \inf_{c \in \mathcal{C}} f(\bx_t,c) \bigr] \le 2M$, and $\E_{p_t(c)} \bigl[ \max \mu_t(\bx^*,c) - \min \mu_t(\bx^*,c)  \bigr] \le 2 [M + \sqrt{\beta_t} \sigma_t(\bx^*,c) ] \le 2 [M+\sqrt{\beta_t}]$ where $M = \sup_{(\bx,c) \in \mathcal{X} \times \mathcal{C}}\card{f(\bx,c)}$ and $\sigma_t(\bx,c)\le 1,\forall [\bx,c]$. For the final step of our proof, we follow the literature in Bayesian optimization [\citenum{Srinivas_2010Gaussian}] to introduce a sample complexity parameter, namely the maximum information gain:
\begin{align*}
\gamma_T := \max_{ \{(\bx_t,c_t) \}_{t=1}^T} \log \det (\idenmat_t + \sigma^{-2}_f \bK_T).
\end{align*}
where we use $\bK_T$ to denote the generated kernel matrix from dataset $D_T = \braces{[\bx_i,c_i]}_{i=1}^T  \subset [\mathcal{X} \times \mathcal{C}]$ at iteration $T$.
The information gain is used in the regret bounds for most of Bayesian optimization research [\citenum{Nguyen_ACML2017Regret,kirschner2020distributionally}].
\begin{lemma} (adapted Lemma 7 in \citet{Nguyen_ACML2017Regret})
The sum of the predictive variances is bounded by the maximum information gain $\gamma_T$.
That is $\forall \bx,c \in \mathcal{X \times C}, \sum_{t=1}^T \sigma_{t-1}^2 (\bx,c) \le \frac{2}{ \log(1+\sigma^{-2}_f) } \gamma_T$ where $\sigma_f$ is the standard deviation of the output noise.
\end{lemma}

Using the above Lemma of maximum information gain, we take the square of the term $2\sqrt{\beta_t}\E_{p_t(c)}[\sigma_t(\bx_t,c)]$ in Eq. (\ref{eq:r_t}) to have:
\begin{align}
    \sum_{t=1}^T 4 \beta_t \E_{p_t(c)}[\sigma^2_t(\bx_t,c)] \le \frac{8 \beta_T \gamma_T}{\log(1 + \sigma_f^{-2})} 
\end{align}
By using Cauchy-Schwarz inequality, we get
\begin{align} \label{eq_sum_rt_to_MIG}
    \sum_{t=1}^T 2\sqrt{\beta_t}\E_{p_t(c)}[\sigma_t(\bx,c)] \le \sqrt{ \frac{8 T \beta_T \gamma_T}{\log(1 + \sigma_f^{-2})}} 
\end{align}
in which the term $T$ has been included in the right.

Using the above results we get
\begin{align}
 R_T(\varphi) &\leq 2 \sum_{t=1}^T \sqrt{\beta_t} \E_{p_t(c)}[\sigma_t(\bx,c)] + \bigl(2M + \sqrt{\beta_T} \bigr) \sum_{t=1}^T \varepsilon_t\\&\leq 2 \sqrt{\beta_T} \sum_{t=1}^T \max_c [\sigma_t(\bx,c)] + \bigl(2M + \sqrt{\beta_T} \bigr) \sum_{t=1}^T \varepsilon_t\\
 &\leq 2 \sqrt{\beta_T} \max_{q \in \braces{p_1,\ldots,p_T}}\sum_{t=1}^T  \E_{q(c)}[\sigma_t(\bx,c)] + \bigl(2M + \sqrt{\beta_T} \bigr) \sum_{t=1}^T \varepsilon_t\\
 &\leq 2 \sqrt{\beta_T} \max_{q \in \braces{p_1,\ldots,p_T}}\E_{q(c)}\left[\sum_{t=1}^T  \sigma_t(\bx,c)\right] + \bigl(2M + \sqrt{\beta_T} \bigr) \sum_{t=1}^T \varepsilon_t\\
 &\leq \frac{ \sqrt{8 T \beta_T \gamma_T} }{\log(1 + \sigma_f^{-2})}  + \bigl(2M + \sqrt{\beta_T} \bigr) \sum_{t=1}^T \varepsilon_t \label{eq_final_regret}
\end{align}
where we have used  Eq. (\ref{eq_sum_rt_to_MIG}) to obtained Eq. (\ref{eq_final_regret}).
\end{proof}

\begin{lemma}\label{lemma:telescoping}
For any $T > 0$, it holds that
\begin{align*}
    \sum_{t=1}^T \bracket{\frac{1}{\sqrt{t} + \sqrt{t+1}}} = \sqrt{T+1} - 1 \leq \sqrt{T},
\end{align*}
\end{lemma}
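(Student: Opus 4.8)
The plan is to exploit the algebraic identity obtained by rationalizing each summand. First I would multiply numerator and denominator of the general term by the conjugate $\sqrt{t+1} - \sqrt{t}$, using the difference-of-squares factorization $(\sqrt{t+1}+\sqrt{t})(\sqrt{t+1}-\sqrt{t}) = (t+1) - t = 1$. This immediately collapses each term to
\begin{align*}
    \frac{1}{\sqrt{t} + \sqrt{t+1}} = \sqrt{t+1} - \sqrt{t}.
\end{align*}

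Next I would recognize the resulting sum as telescoping. Summing from $t=1$ to $T$, all intermediate terms cancel in pairs, leaving only the last positive endpoint $\sqrt{T+1}$ and the first negative endpoint $\sqrt{1} = 1$, so that $\sum_{t=1}^T (\sqrt{t+1} - \sqrt{t}) = \sqrt{T+1} - 1$. This establishes the claimed equality.

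For the final inequality $\sqrt{T+1} - 1 \leq \sqrt{T}$, I would rearrange it as $\sqrt{T+1} \leq \sqrt{T} + 1$ and verify it by squaring both (nonnegative) sides: the right-hand side becomes $T + 2\sqrt{T} + 1$, which dominates $T+1$ since $2\sqrt{T} \geq 0$ for all $T > 0$. Equivalently, one may appeal to the concavity/subadditivity of the square root. There is no genuine obstacle here—the entire argument is elementary—so the only care needed is to state the conjugate multiplication cleanly and to confirm the telescoping endpoints; the inequality step is immediate once the closed form is in hand.
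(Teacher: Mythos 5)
Your proposal is correct and follows essentially the same route as the paper: rationalize each summand via the conjugate to get $\sqrt{t+1}-\sqrt{t}$, telescope, and then dispose of the final inequality elementarily. You spell out the squaring argument for $\sqrt{T+1}-1\leq\sqrt{T}$, which the paper leaves as ``a simple inequality,'' but there is no substantive difference.
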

\begin{proof}
By simply rationalizing the denominator, we have that
\begin{align*}
  \frac{1}{\sqrt{t} + \sqrt{t+1}} = \sqrt{t+1} - \sqrt{t},
\end{align*}
and via a simple telescoping sum, the required result holds. A simple inequality will then yield the final inequality.
\end{proof}

\begin{theorem}[$\varphi$-divergence Regret]\textcolor{blue}{(Theorem \ref{theorem_phi_regret} in the main paper)}
Let $M = \sup_{(\bx,c) \in \mathcal{X} \times \mathcal{C}}\card{f(\bx,c)} < \infty$ and suppose $f$ has bounded RKHS norm with respect to $k$. For any lower semicontinuous convex $\varphi: \mathbb{R} \to (-\infty,\infty]$ with $\varphi(1) = 0$, if there exists a monotonic invertible function $\Gamma_{\varphi}: [0,\infty) \to \mathbb{R}$ such that $\operatorname{TV}(p,q) \leq \Gamma_{\varphi}(\mathsf{D}_{\varphi}(p,q))$, the following holds
\begin{align*}
    R_T(\varphi) \leq \frac{ \sqrt{8 T \beta_T \gamma_T}}{\log(1 + \sigma^{-2}_f)}  + \bigl(2M + \sqrt{\beta_T} \bigr) \sum_{t=1}^T \Gamma_{\varphi}( \varepsilon_t),
\end{align*}
with probability $1 - \delta$, where $\beta_t = 2 ||f||^2_k + 300 \gamma_t \ln^3(t/\delta)$,  $\gamma_t$ is the maximum information gain as defined above, and $\sigma_f$ is the standard deviation of the output noise.
\end{theorem}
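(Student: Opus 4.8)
The plan is to reduce the general $\varphi$ statement to the Total Variation regret bound already established in Theorem~\ref{thm_TV_Regret_appendix}, using the comparison inequality $\operatorname{TV}(p,q) \leq \Gamma_{\varphi}(\mathsf{D}_{\varphi}(p,q))$ as the sole bridge. The obstruction to copying the TV argument verbatim is that for a generic $\varphi$ the inner problem $\inf_{q \in B_{\varphi}^{t}(p_t)} \E_{q(c)}[f(\bx,c)]$ has no closed form analogous to Example~\ref{example_tv}; I would circumvent this entirely by a set-containment argument rather than by solving the inner minimization.

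First I would observe that monotonicity and invertibility of $\Gamma_{\varphi}$ give the ball inclusion $B_{\varphi}^{t}(p_t) \subseteq \braces{q : \operatorname{TV}(q,p_t) \leq \Gamma_{\varphi}(\varepsilon_t)}$: indeed, if $\mathsf{D}_{\varphi}(q,p_t) \leq \varepsilon_t$ then applying the (non-decreasing) map $\Gamma_{\varphi}$ together with the comparison inequality yields $\operatorname{TV}(q,p_t) \leq \Gamma_{\varphi}(\mathsf{D}_{\varphi}(q,p_t)) \leq \Gamma_{\varphi}(\varepsilon_t)$. Taking an infimum of $\E_{q(c)}[f(\bx,c)]$ over the smaller set $B_{\varphi}^{t}(p_t)$ can only increase its value, so for every $\bx$ I obtain the two-sided sandwich $\inf_{q : \operatorname{TV}(q,p_t) \le \Gamma_{\varphi}(\varepsilon_t)} \E_{q(c)}[f(\bx,c)] \le \inf_{q \in B_{\varphi}^{t}(p_t)} \E_{q(c)}[f(\bx,c)] \le \E_{p_t(c)}[f(\bx,c)]$, where the upper bound comes from the feasible point $q = p_t$.

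Next I would bound the per-step robust regret $r_t = \inf_{q \in B_{\varphi}^{t}}\E_{q(c)}[f(\bx_t^{\ast},c)] - \inf_{q \in B_{\varphi}^{t}}\E_{q(c)}[f(\bx_t,c)]$. I apply the upper half of the sandwich to the $\bx_t^{\ast}$ term and the lower half to the $\bx_t$ term, and then invoke the closed form of Example~\ref{example_tv} at radius $\Gamma_{\varphi}(\varepsilon_t)$ to evaluate the TV inner value. This reproduces exactly the inequality reached at step $(3)$ of the proof of Theorem~\ref{thm_TV_Regret_appendix}, but with every occurrence of $\varepsilon_t$ replaced by $\Gamma_{\varphi}(\varepsilon_t)$; the subsequent steps $(4)$--$(5)$ --- insertion of the GP mean and variance, the confidence bound of Lemma~\ref{lemma_bound_mu_sigma_f}, the acquisition-maximality of $\bx_t$ (taken with respect to the TV surrogate at radius $\Gamma_{\varphi}(\varepsilon_t)$), and the uniform bound $M$ on $f$ --- go through unchanged, giving $r_t \leq 2\sqrt{\beta_t}\,\E_{p_t(c)}[\sigma_t(\bx_t,c)] + \Gamma_{\varphi}(\varepsilon_t)\bigl(2M + \sqrt{\beta_t}\bigr)$. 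Summing over $t$, applying Cauchy--Schwarz and the maximum-information-gain lemma to the first term exactly as in Eq.~\eqref{eq_sum_rt_to_MIG}, and collecting the second term into $\sum_{t=1}^T \Gamma_{\varphi}(\varepsilon_t)$ then yields the claimed bound.

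The main obstacle is conceptual rather than computational: it lies in recognizing that the inner minimization need not be solved for general $\varphi$ at all, and that the role of $\Gamma_{\varphi}$ is precisely to transport feasibility from the $\varphi$-ball into a TV-ball where the closed form of Example~\ref{example_tv} is available. Care is needed to ensure the monotonicity of $\Gamma_{\varphi}$ orients both the ball inclusion and the resulting value inequalities in the correct direction, and to confirm that $\Gamma_{\varphi}(0) = 0$ so that the degenerate choice $\varepsilon_t = 0$ recovers the non-robust GP-UCB regret.
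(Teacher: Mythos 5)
Your proposal is correct and follows essentially the same route as the paper: the ball inclusion $B_{\varphi}^{t}(p_t) \subseteq \{q : \operatorname{TV}(q,p_t) \leq \Gamma_{\varphi}(\varepsilon_t)\}$ obtained from monotonicity of $\Gamma_{\varphi}$, followed by a reduction to the Total Variation regret argument of Theorem~\ref{thm_TV_Regret_appendix} with radius $\Gamma_{\varphi}(\varepsilon_t)$ in place of $\varepsilon_t$. If anything, your explicit two-sided sandwich and your remark about which acquisition surrogate $\bx_t$ maximizes are slightly more careful than the paper's terse ``same decomposition'' statement.
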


\begin{proof}
The key aspect of this proof is to note that
\begin{align}
    \braces{q \in \Delta(\mathcal{C}) : \mathsf{D}_{\varphi}(q,p_t) \leq \varepsilon_t } \subseteq \braces{q \in \Delta(\mathcal{C}) : \operatorname{TV}(q,p_t) \leq \Gamma_{\varphi}(\varepsilon_t) },
\end{align}
which is due to the inequality and monotonicity of the $\Gamma_{\varphi}$ function. By following similar steps to the Total Variation derivation, we have
\begin{align}
    &\inf_{q : \mathsf{D}_{\varphi}(q,p_t) \leq \varepsilon_t} \E_{q(c)}[f(\bx^{*},c)] - \inf_{q : \mathsf{D}_{\varphi}(q,p_t) \leq \varepsilon_t} \E_{q(c)}[f(\bx_t,c)]\\
    &\leq \sqrt{\beta_t}\E_{p_t(c)}[\sigma_t(\bx^*,c)^2 + \mu(\bx^{*},c)]  - \inf_{q : \mathsf{D}_{\varphi}(q,p_t) \leq \varepsilon_t} \E_{q(c)}[f(\bx_t,c)]\\
    &\leq \sqrt{\beta_t}\E_{p_t(c)}[\sigma_t(\bx^*,c)^2 + \mu(\bx^{*},c)]  - \inf_{q  : \operatorname{TV}(q,p_t) \leq \Gamma_{\varphi}(\varepsilon_t)} \E_{q(c)}[f(\bx_t,c)].
\end{align}
By following the same decomposition as in the proof for the Total Variation, presented in Theorem \ref{thm_TV_Regret_appendix}, we achieve the same result except with the radius epsilon replaced with $\Gamma_{\varphi}(\varepsilon_t)$.
\end{proof}

\section{Extension to KL divergence \label{app_sec_kl_divergence}}

Our theoretical result for $\phi$-divergence can also be readily extended to handle KL divergence.
However, since the empirical result with KL divergence is inferior to the result with TV and $\chi$-divergences.

\begin{example}[KL-divergence] \label{example_kl}
Let $\phi(u) = u \log u$, we have
\begin{align*}
    \sup_{\bx \in \mathcal{X}} \inf_{q \in B_{\phi}^{t}(p)} \E_{c \sim q}[f(\bx,c)] = \nonumber \sup_{\bx \in \mathcal{X},  \lambda \geq 0} \bracket{ -\lambda \epsilon_t - \lambda \log \E_{p_{t}(c)} \left[ \exp\bracket{\frac{-f(\bx,c)}{\lambda}} \right]  }.
\end{align*}
\end{example}

\begin{proof}
Using Theorem \ref{theorem:main}, we derive $\varphi^{\star}$ for this choice which can easily be verified to be $\varphi^{\star}(t) = \exp(t-1)$. For simplicity, let $\lambda = 1$ and note that we have
\begin{align}
&\sup_{ b \in \mathbb{R}} \bracket{ b - \E_{p_{t}(c)}\left[ \exp \bracket{b - f(\bx,c) - 1 }\right]  }= \sup_{ b \in \mathbb{R}} \bracket{ b - \exp\bracket{b} \cdot A  },
\end{align}
where $A =\E_{p_{t}(c)}\left[ \exp \bracket{- f(\bx,c) - 1 }\right]$ is a constant. The above can easily be solved as it admits a differentiable one-dimensional objective and we get the largest value when $b = -\log A$ which then yields
\begin{align}
    &\sup_{ b \in \mathbb{R}} \bracket{ b - \E_{p_{t}(c)}\left[ \exp \bracket{b - f(\bx,c) - 1 }\right]  } = \log \E_{p_{t}(c)}\left[\exp(-f(\bx,c)) \right].
\end{align}
The proof concludes noting that $(\lambda \varphi)^{\star}(u) = \lambda \varphi^{\star}(u / \lambda)$ for any $\lambda > 0$.
\end{proof}
The Kullback-Leibler (KL) divergence [\citenum{kullback_1951_information}] is a popular choice when quantifying information shift, due to its link with entropy. There exists work that studied distributional shifts with respect to the KL divergence for label shifts [\citenum{zhang2020coping}]. Compared to the general theorem, the KL-divergence variant allows us to find $b \in \mathbb{R}$ in closed form. We remark that we place the KL divergence derivation here for its information-theoretic importance however in our experiments, we find that other choices of $\varphi$-divergence outperform the KL-divergence. We now show such examples and particularly illustrate that we can even solve for $\lambda \geq 0$ in closed form for these cases, yielding only a single maximization over $\mathcal{X}$. 

The regret bound for the case with KL divergence is presented in Theorem \ref{theorem_phi_regret} using $\Gamma_{\textrm{KL}}(t) = 1 - \exp(-t)$.

\begin{figure*}
\begin{centering}
\includegraphics[width=0.325\textwidth]{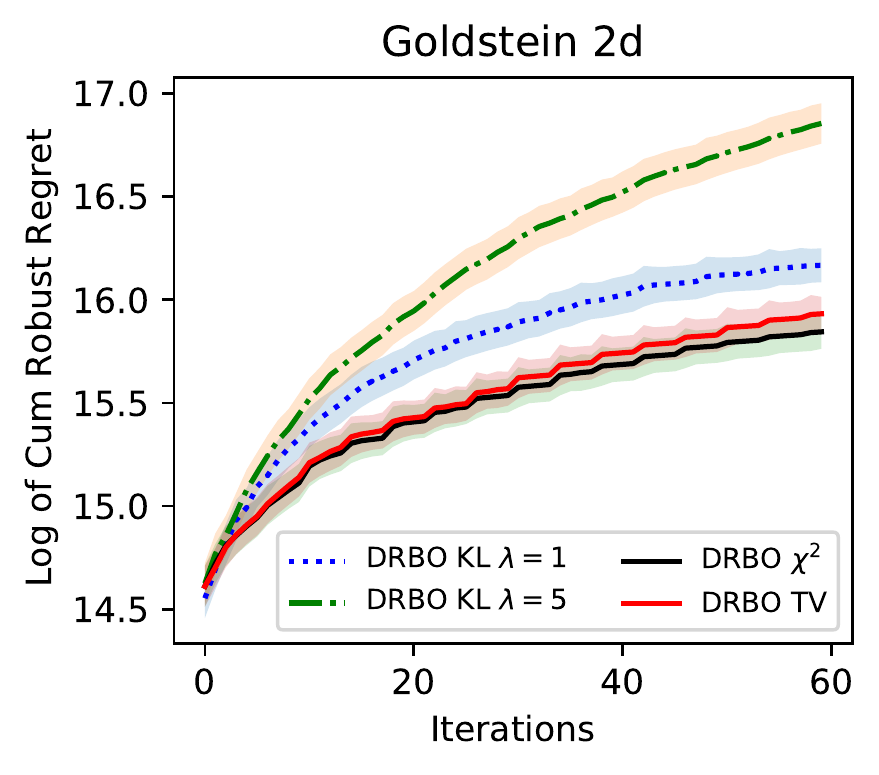}
\includegraphics[width=0.325\textwidth]{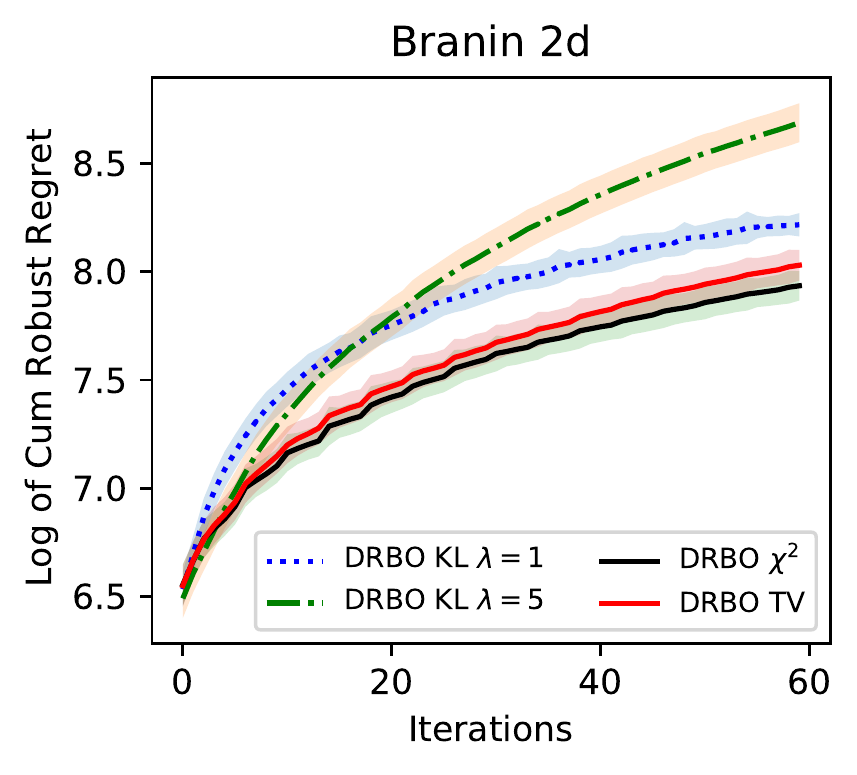}
\includegraphics[width=0.325\textwidth]{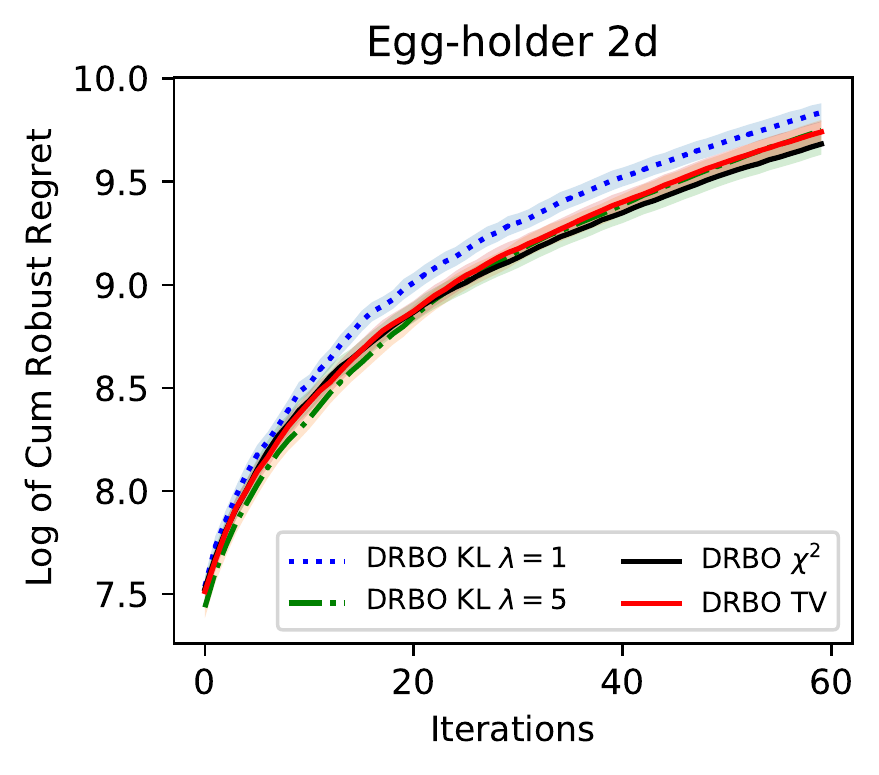}

\end{centering}

\caption{We empirically found that using TV and $\chi^2$ divergences typically obtain better performance than KL divergence. This can be due to the sensitivity of the additional hyperparameter $\lambda$ in KL. We have considered using $\lambda=1$ and $\lambda=5$ in these experiments. The best $\lambda$ is unknown in advance and depends on the functions.} \label{fig:kl_tv_chi2}
\end{figure*}

\subsection{Optimization with the GP Surrogate for KL-divergence}
To handle the distributional robustness, we have rewritten the objective function using $\phi$ divergences in Theorem \ref{theorem:main} with the KL in \textbf{Example} \ref{example_kl}.


Similar to the cases of TV and $\chi^2$, we model the GP surrogate model using the observed data $\{\bx_i,y_i\}_{i=1}^{t-1}$ at each iteration $t$-th, . Then, we select a next point $\bx_t$ to query a black-box function by maximizing the acquisition function which is build on top of the GP surrogate: 
\begin{align*}
\bx_t = \arg \max_{\bx \in \mathcal{X} } \alpha(\bx).
\end{align*}
While our method is not restricted to the form of the acquisition function, for convenient in the theoretical analysis, we follow the GP-UCB [\citenum{Srinivas_2010Gaussian}]. Given the GP predictive mean and variance from Eqs. (\ref{eq:gp_mean_xc},\ref{eq:gp_var_xc}), we have the acquisition function for the KL in Example \ref{example_kl} as follows:
\begin{align} \label{eq:alpha_kl}
    \alpha^{KL}(\bx) &:=   -\lambda \epsilon_t - \lambda \log \E_{p_{t}(c)} \left[ \exp\bracket{\frac{-\frac{1}{|C|} \sum_{c} \left[ \mu_t(\bx,c) + \sqrt{\beta_t}\sigma_t(\bx,c) \right]}{\lambda}} \right] 
\end{align}
where $c \sim q$ can be  generated in a one dimensional space to approximate the expectation and the variance. In the experiment, we define $q$ as the uniform distribution to draw $c \sim q$.

\paragraph{Comparison of KL, TV and $\chi^2$ divergences.}
In Fig. \ref{fig:kl_tv_chi2}, we present the additional experiments showing the comparison of using different divergences including KL, TV and $\chi^2$. We show empirically that the KL divergence performs generally inferior than the TV and $\chi^2$.


\paragraph{Adaptive value of $\epsilon_t$.}
We show the adaptive value of $\epsilon_t$ by iterations for TV, $\chi^2$ and KL in Fig. \ref{fig:adaptive_epsilon}.

\begin{figure*}
\begin{centering}
\includegraphics[width=0.57\textwidth]{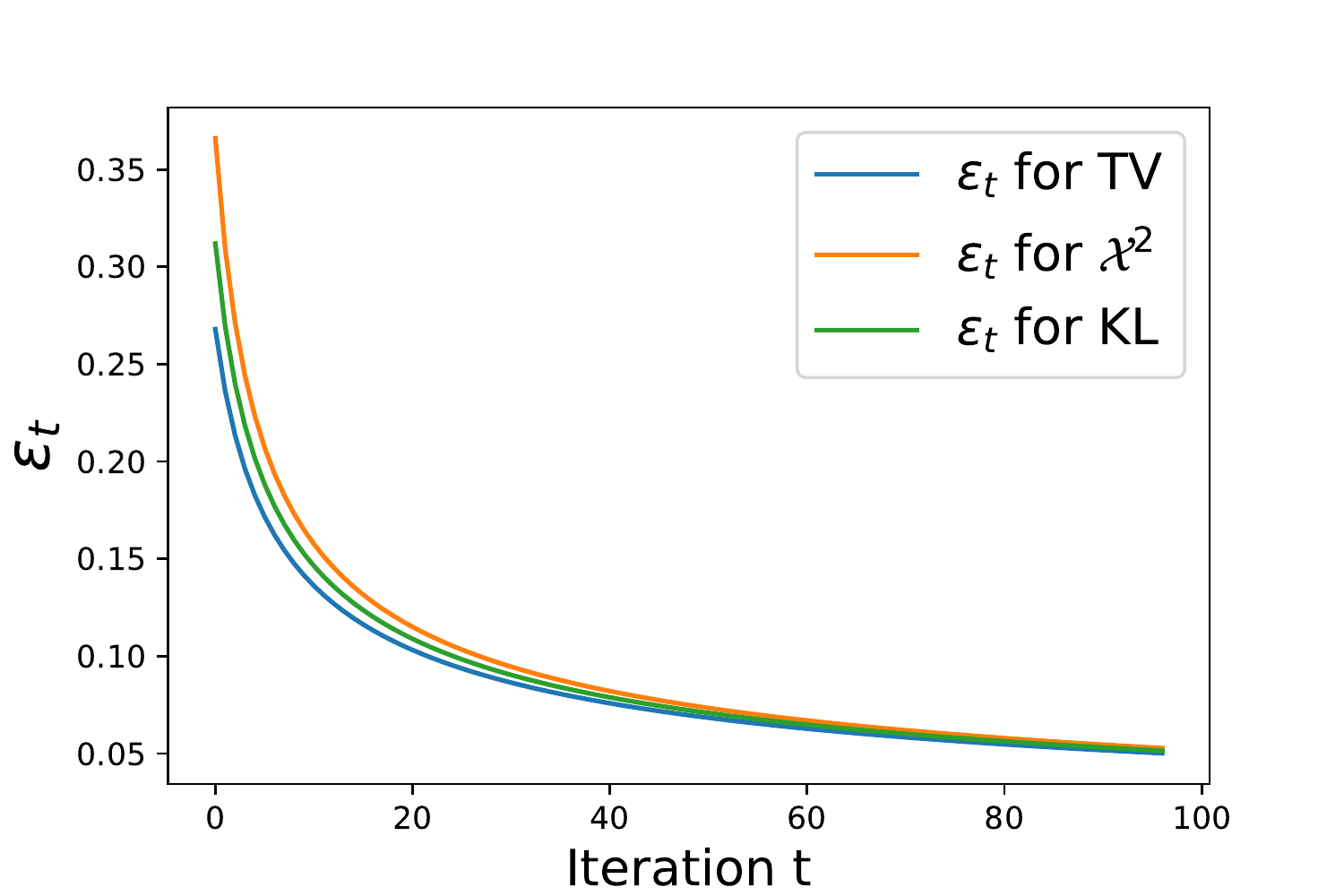}

\end{centering}

\caption{The adaptive value of $\epsilon_t$ over iterations, $ \lim_{t \rightarrow \infty}, \epsilon_t =0$. } \label{fig:adaptive_epsilon}
\end{figure*}

In Fig. \ref{fig:ablation_study_appendix}, we present additional visualization to complement the analysis in Fig. \ref{fig:ablation_study}. In particular, we illustrate three other functions including branin, goldstein and six-hump camel. The additional results are consistent with the finding presented in the main paper (Section \ref{sec:ablation_study} and Fig. \ref{fig:ablation_study}).




\begin{figure*}
\begin{subfigure}[b]{1\textwidth}
\includegraphics[width=0.33\textwidth]{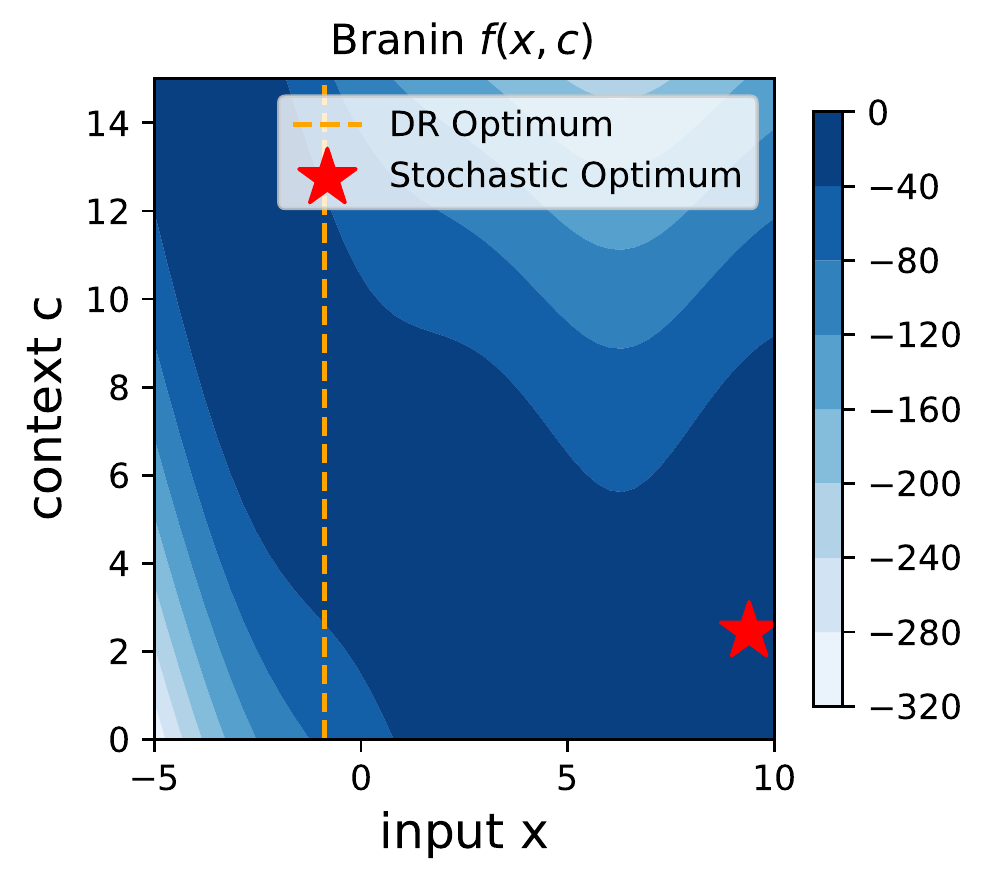}
\includegraphics[width=0.31\textwidth]{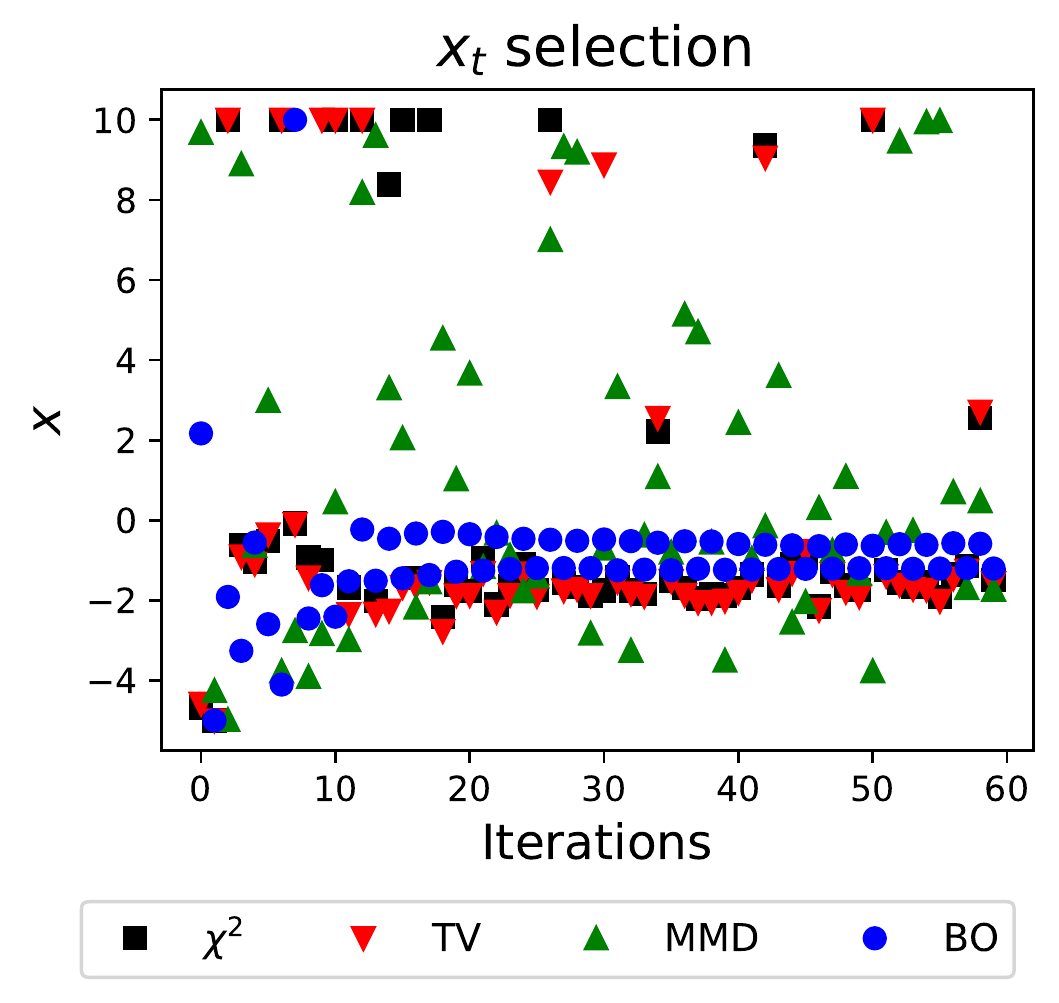}
\includegraphics[width=0.35\textwidth]{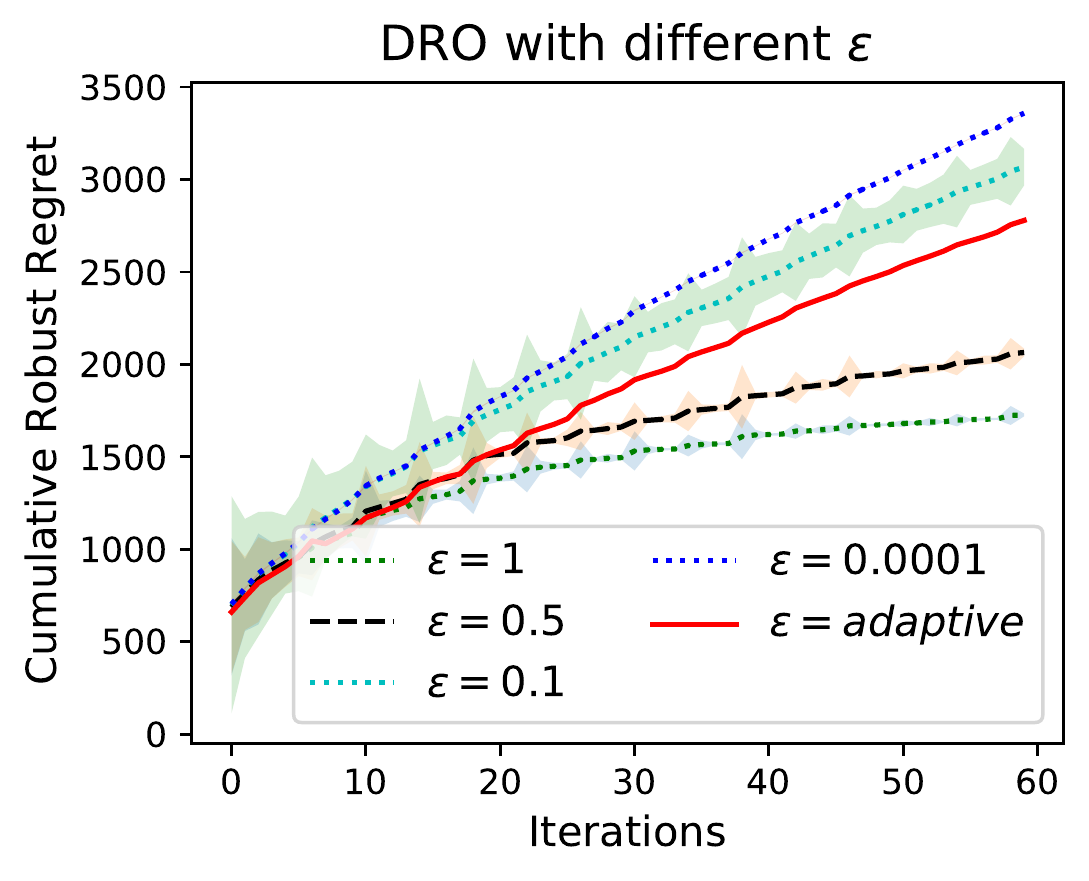}
\caption{Stochastic and DRO solution are different. The choices of $\epsilon=\{0.5,1\}$ result in the best performance.} \label{fig:appendix_sto_dro_different}
\end{subfigure}

\begin{subfigure}[b]{1\textwidth}
\includegraphics[width=0.34\textwidth]{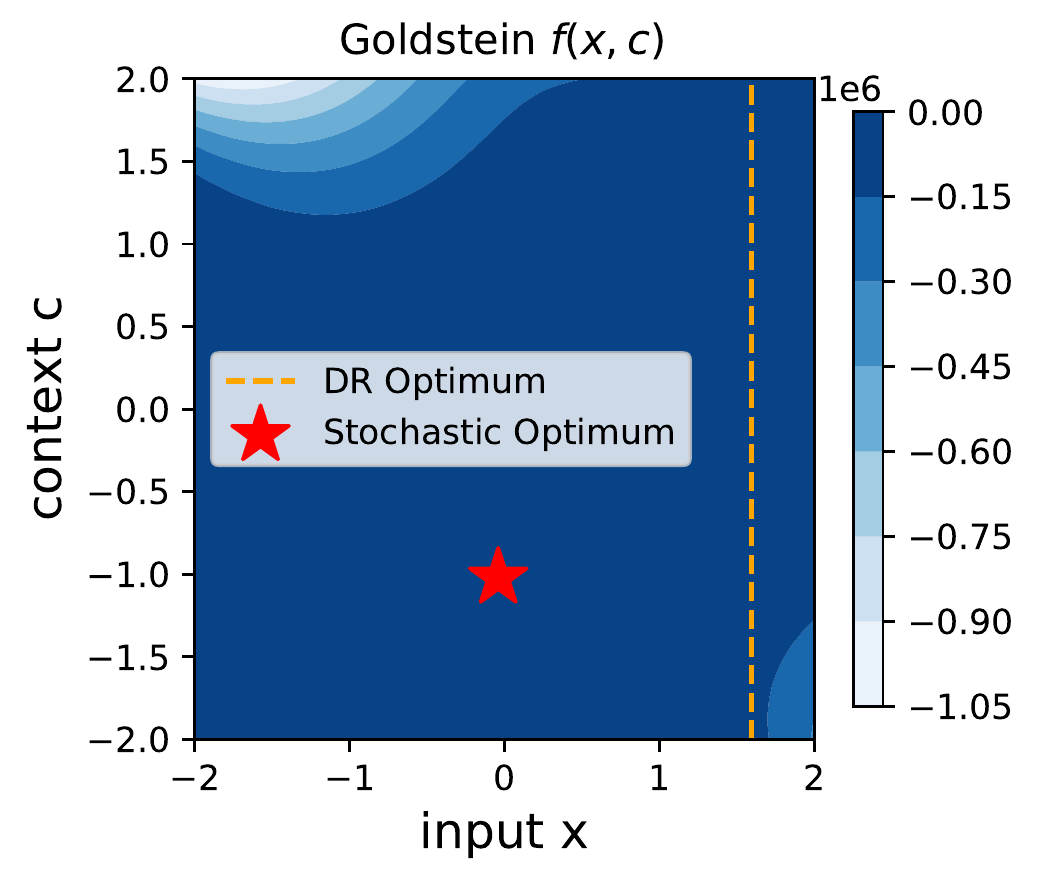}
\includegraphics[width=0.31\textwidth]{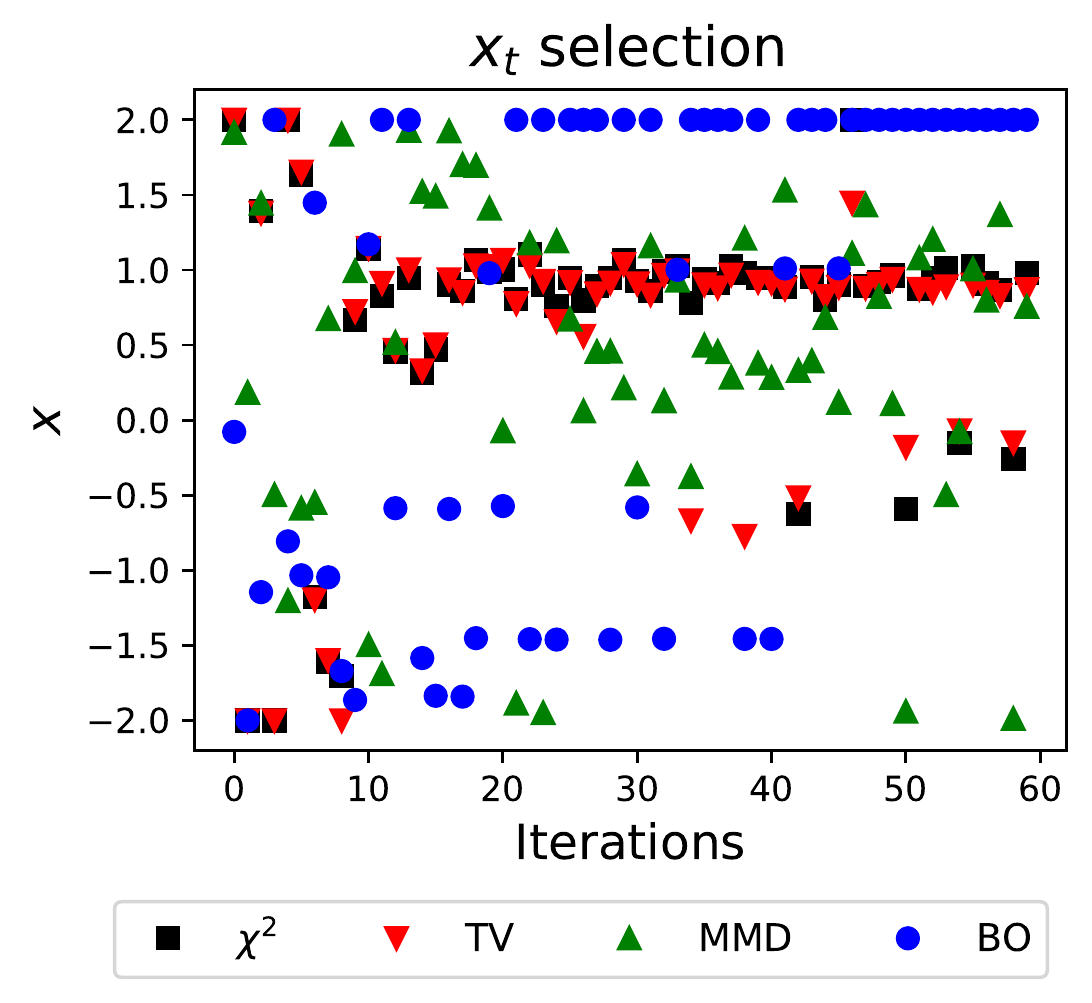}
\includegraphics[width=0.34\textwidth]{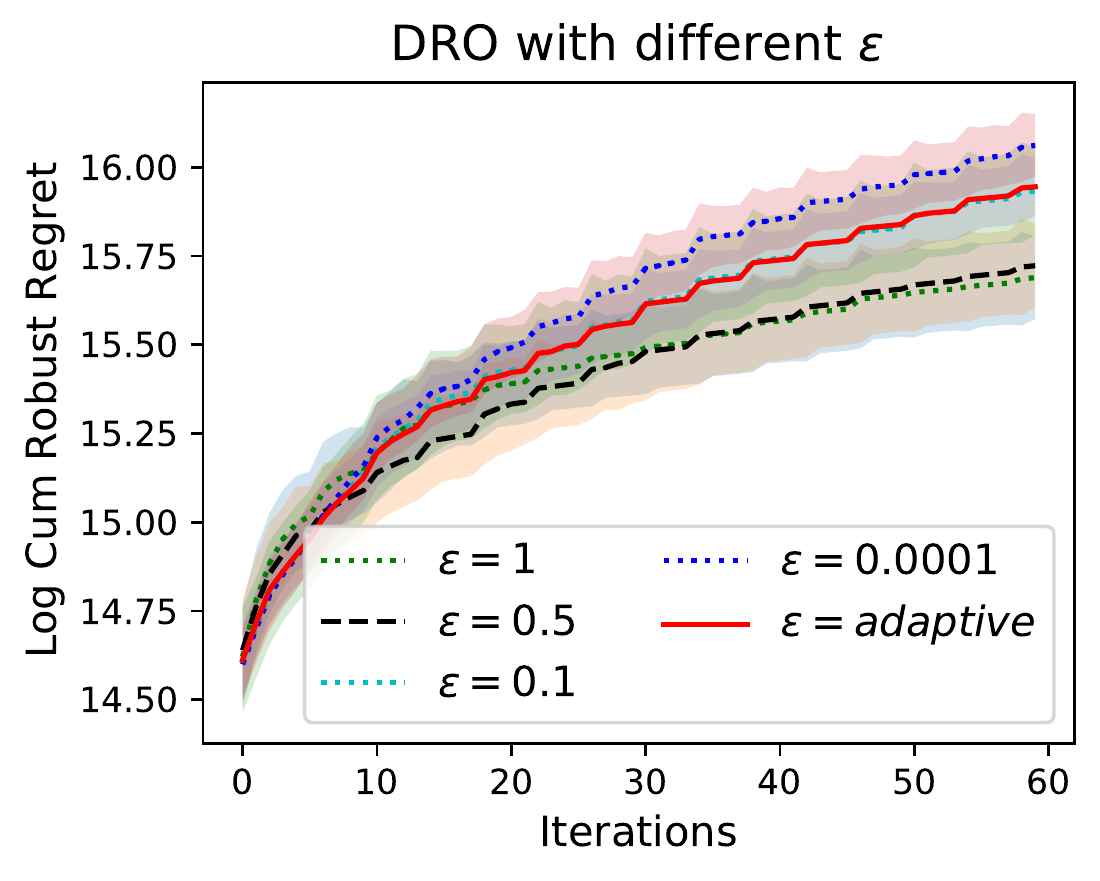}
\caption{Stochastic and DRO solution are different. The choices of $\epsilon=\{0.5,1\}$ result in the best performance.} \label{fig:appendix_sto_dro_different2}
\end{subfigure}

\begin{subfigure}[b]{1\textwidth}
\includegraphics[width=0.335\textwidth]{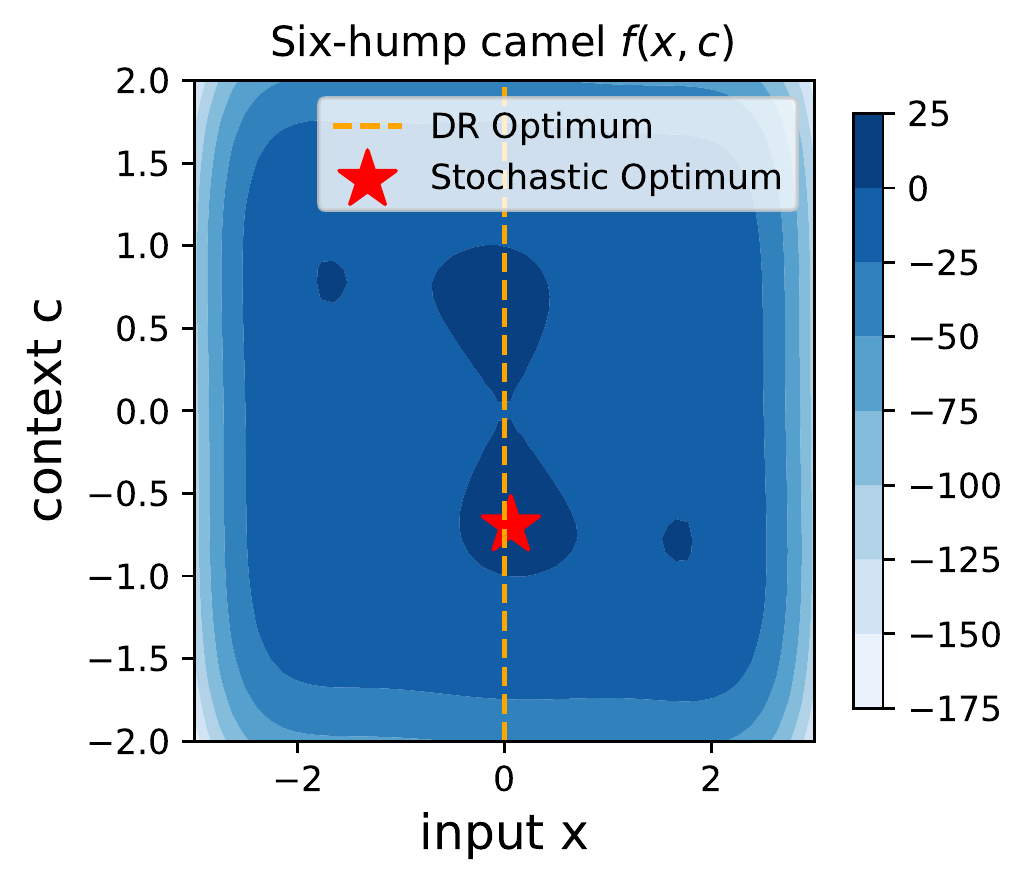}
\includegraphics[width=0.31\textwidth]{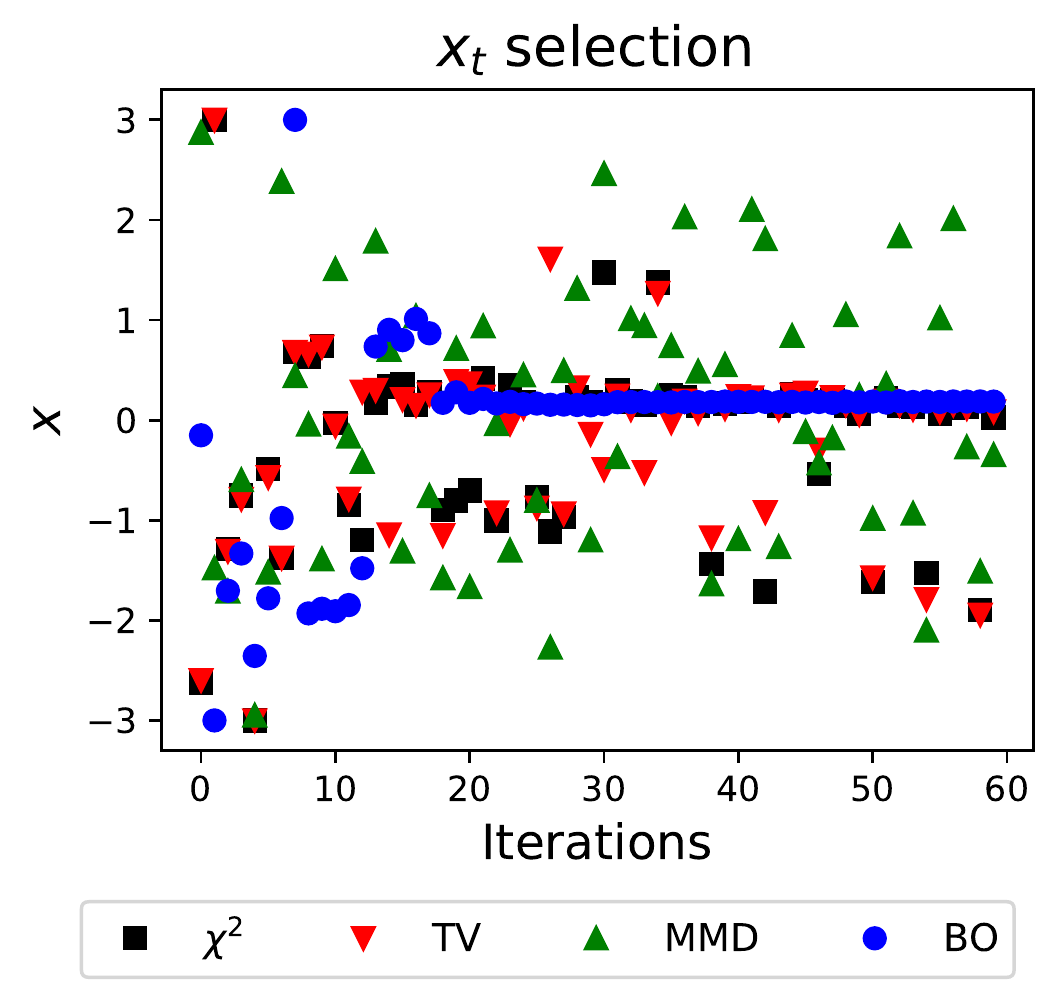}
\includegraphics[width=0.345\textwidth]{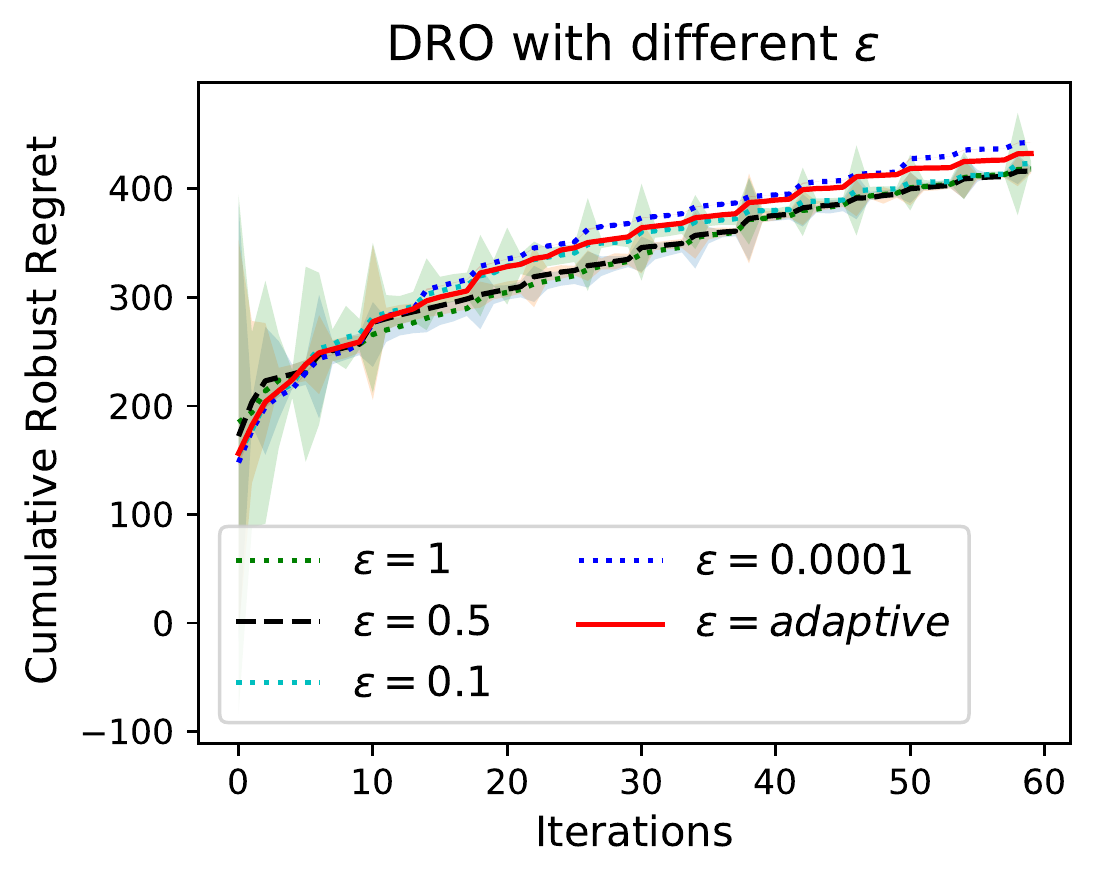}
\caption{Stochastic and DRO solution are coincide. $\epsilon \rightarrow 0$ is the best.} \label{fig:appendix_sto_dro_same}
\end{subfigure}

\caption{We complement the result presented in Fig. \ref{fig:ablation_study} using three additional functions. There are two settings in DRO when the stochastic solution and robust solution are different (\textit{top}) and identical (\textit{bottom}). \textit{Left}: the original function $f(\bx,c)$. \textit{Middle}: the selection of input  $\bx_t$  over iterations. \textit{Right}: optimization performance with different $\epsilon$. The adaptive choice of $\epsilon_t$ (in \textcolor{red}{red}) always produces stable performance across various choices of $\epsilon_t$. This is especially useful in unknown functions where we do not have prior assumption on the underlying structure to decide on which large or small values of $\epsilon_t$ to be specified.} \label{fig:ablation_study_appendix}
\end{figure*}

\subsection{Selection radii for $\varphi$-divergences}
Regarding the choice of radii $\varepsilon_t$, if we apply Gaussian smoothing on $p_t$ then there exists convergence rates for $\varphi$-divergences from \citet{goldfeld2020convergence}. Therefore we can select this choice and ensure that the population distribution $P$ is within the DRO-BO ball. We emphasize that we can choose smoothed distributions since the main Theorem holds for continuous context distributions, a feature specific to our contribution.

\end{document}